\renewcommand{\L}{\mathcal{L}}
\begin{document}

%

%

\twocolumn[

\aistatstitle{Simultaneously Reconciled Quantile Forecasting of Hierarchically Related Time Series}

\aistatsauthor{ Xing Han \And Sambarta Dasgupta \And  Joydeep Ghosh }

\aistatsaddress{ UT Austin \\ \texttt{aaronhan223@utexas.edu} \And  IntuitAI \\ \texttt{dasgupta.sambarta@gmail.com} \And UT Austin \\ \texttt{jghosh@utexas.edu}} 

]

\begin{abstract}
Many real-life applications involve simultaneously forecasting multiple time series that are hierarchically related via aggregation or disaggregation operations. For instance, commercial organizations often want to forecast inventories simultaneously at store, city, and state levels for resource planning purposes. In such applications, it is important that the forecasts, in addition to being reasonably accurate, are also consistent w.r.t one another. Although forecasting such hierarchical time series has been pursued by economists and data scientists, the current state-of-the-art models use strong assumptions, e.g., all forecasts being unbiased estimates, noise distribution being Gaussian. Besides, state-of-the-art models have not harnessed the power of modern nonlinear models, especially ones based on deep learning. In this paper, we propose using a flexible nonlinear model that optimizes quantile regression loss coupled with suitable regularization terms to maintain the consistency of forecasts across hierarchies. The theoretical framework introduced herein can be applied to any forecasting model with an underlying differentiable loss function. A  proof of optimality of our proposed method is also provided. Simulation studies over a range of datasets highlight the efficacy of our approach.
\end{abstract}

\section{Introduction} \label{background}
\label{sec:intro}

\newcommand{\hy}{\hat{y}}
\newcommand{\hb}{\hat{\beta}}
\newcommand{\ty}{\Tilde{y}}
\newcommand{\hY}{\hat{Y}}
\newcommand{\tY}{\Tilde{Y}}
\newcommand{\TL}{\widetilde{\mathcal{L}}_n}
\renewcommand{\S}{\mathcal{S}}
\newcommand{\I}{\mathcal{I}}

\tikzstyle{circ} = [draw, circle, fill=white!20, radius=2.6, minimum size=.8cm, inner sep=0pt]
\tikzstyle{line} = [draw]

\begin{figure*}
\centering
{
\setlength{\tabcolsep}{1pt} 
\renewcommand{\arraystretch}{1} 
    \begin{tabu*}{cccc}
    \includegraphics[width=.26\textwidth]{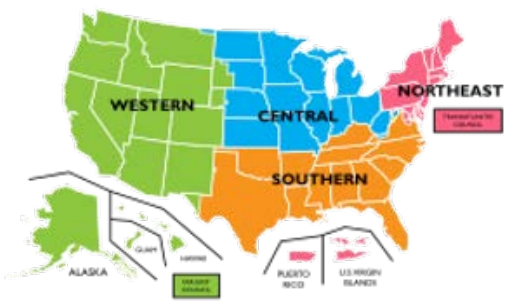} & 
    \begin{tikzpicture}[node distance = 1.cm,auto]
    \node [circ] (step1) {$v_1$};
    \node [circ, below left = 0.3cm and 1.2cm of step1, label={793:{$e_{1, 2}$}}] (step2) {$v_2$};
    \node [circ, below right = 0.3cm and 1.2cm of step1, label={493:{$e_{1, 3}$}}] (step3) {$v_3$};
    \node [circ, below left = 0.3cm and 0.3cm of step2] (step4) {$v_4$};
    \node [circ, below right = 0.3cm and 0.3cm of step2] (step5) {$v_5$};
    \node [circ, below left = 0.3cm and 0.3cm of step3] (step6) {$v_6$};
    \node [circ, below right = 0.3cm and 0.3cm of step3] (step7) {$v_7$};
    \path [line, rounded corners] (step1) -| (step2);
    \path [line, rounded corners] (step1) -| (step3);
    \path [line, rounded corners] (step2) -| (step4);
    \path [line, rounded corners] (step2) -| (step5);
    \path [line, rounded corners] (step3) -| (step6);
    \path [line, rounded corners] (step3) -| (step7);
    \end{tikzpicture} &
    \includegraphics[width=.18\textwidth]{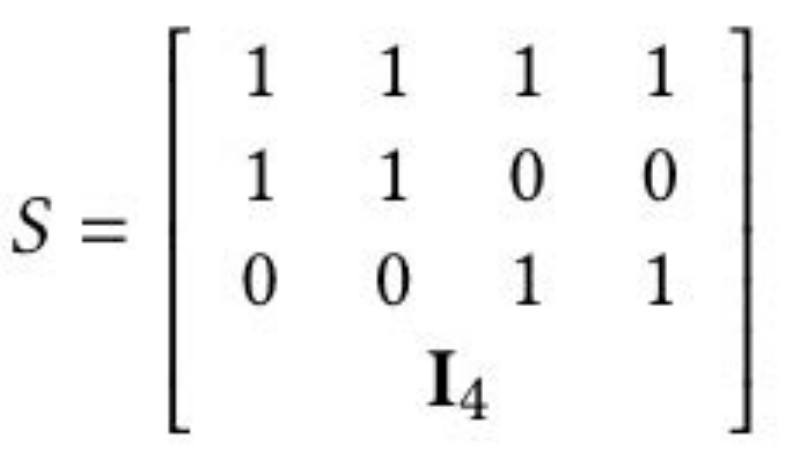} &
    \includegraphics[width=.18\textwidth]{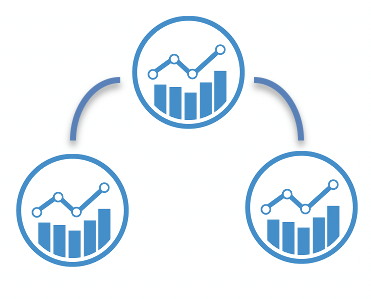} \\
    (a) & (b) & (c) & (d) \\
    \end{tabu*}
    \caption{(a) Example of hierarchically related time series: state population growth forecast, the data is aggregated by geographical locations; (b) corresponding graph structure as nodes and vertices; (c) corresponding matrix representation with four bottom level time series and three aggregated levels; (d) time series forecast at each node.}}
    \label{fig:hierarchical_demo}
\end{figure*}

Hierarchical time series refers to a set of time series organized in a logical hierarchy with the parent-children relations, governed by a set of aggregation and disaggregation operations \citep{hyndman2011optimal, taieb2017coherent}. These aggregations and disaggregations can occur across multiple time series or over the same time series across multiple time granularities. An example of the first kind can be forecasting demand at county, city, state, and country levels \citep{hyndman2011optimal}. An example of the second kind of hierarchy is forecasting demand at different time granularities like daily, weekly, and monthly \citep{athanasopoulos2017forecasting}. The need to forecast multiple time series that are hierarchically related arise in many applications, from financial forecasting \citep{sasforecasting} to demand forecasting \citep{hyndman2016fast, zhao2016multi} and psephology \citep{lauderdale2019model}. The
recently announced M5 competition\footnote{https://mofc.unic.ac.cy/m5-competition/} from the International Institute of Forecasters also involves hierarchical forecasting on Walmart data with a \$100K prize. 
A novel challenge in such forecasting problems is to produce accurate forecasts while maintaining the consistency of the forecasts across multiple hierarchies.

\paragraph{Related Works} Existing hierarchical forecasting methods predominantly employ linear auto-regressive (AR) models that are initially trained while ignoring the hierarchy. The output forecasts produced by these AR models are reconciled afterward for consistency. As shown in Figure \ref{fig:hierarchical_demo}(c), such reconciliation is achieved by defining a mapping matrix, often denoted by $S$, which encapsulates the mutual relationships among the time series \citep{hyndman2011optimal}. The reconciliation step involves inversion and multiplication of the $S$ matrix that leads to the computational complexity of  $\mathcal{O}(n^3 h)$, where $n$ is the number of nodes, and $h$ represents how many levels of hierarchy the set of time series are organized into. Thus,  reconciling hundreds of thousands of time series at a time required for specific industrial applications becomes difficult. Improved versions of the reconciliation were proposed by employing trace minimization \citep{wickramasuriya2015forecasting} and sparse matrix computation \citep{hyndman2016fast}. These algorithms assume that the individual base estimators are unbiased, which is unrealistic in many real-life applications. The unbiasedness assumption was relaxed in \citep{taieb2017coherent} while introducing other assumptions like ergodicity with exponentially decaying mixing coefficient. Moreover, all these existing methods try to impose any reconciliation constraints among the time series as a post inference step, which possesses two challenges: 1. ignoring the relationships across time series during training can potentially lead to suboptimal solutions, 2. additional computational complexity in the inference step, owing to the inversion of $S$ matrix, which makes it challenging to use the technique when a significant amount of time series are encountered in an industrial forecasting pipeline. 

A critical development in time series forecasting has been the application of Deep Neural Networks (DNN) \citep[e.g.,][]{chung2014empirical, lai2018modeling, mukherjee2018armdn, oreshkin2019n, salinas2019deepar, sen2019think, zhu2017deep} which have shown to outperform statistical auto-regressive or other statistical models in several situations. However, no existing method incorporates the hierarchical structure of the set of time series into the DNN learning step. Instead, one hopes that the DNN will learn the relationships from the data. Graph neural networks (GNN) \citep[e.g.,][]{franceschi2019learning, lachapelle2019gradient, wu2020connecting, yu2017spatio, yu2019dag, zhang2020deep, zheng2018dags} have also been used to learn inherent relations among multiple time series; however, they need a pre-defined graph model that can adequately capture the relationships among the time series.
Characterizing uncertainty of the DNN forecast is another critical aspect, which becomes even more complicated when additive noise does not follow a Gaussian distribution \citep[e.g.,][]{blundell2015weight, iwata2017improving, kuleshov2018accurate, lakshminarayanan2017simple, sun2019functional}. If the observation noise model is misspecified, then the performance would be poor however complex neural network architecture one uses. Other works like \citet{salinas2019deepar} use multiple observation noise models (Gaussian, Negative Binomial) and loss functions. It is left to human experts' discretion to select the appropriate loss based on the time series's nature. This approach cannot be generalized and involves human intervention; it is especially not feasible for an industrial forecasting pipeline where predictions are to be generated for a vast number of time series, which can have a widely varying nature of observation noise. Besides, Bayesian approaches also face problems as the prior distribution and loss function assumptions will not be met across all of the time series. One approach that does not need to specify a parametric form of distribution is through quantile regression. Prior works include combining sequence to sequence models with quantile loss to generate multi-step probabilistic forecasts \citep{wen2017multi} and modeling conditional quantile functions using regression splines \citep{gasthaus2019probabilistic}. These works are incapable of handling hierarchical structures within time series. 

Key aspects of multiple, related time series forecasting addressed by our proposed model include:
\begin{enumerate}
	\item introduction of a regularized loss function that captures the mutual relationships among each group of time series from adjacent aggregation levels,
	\item generation of probabilistic forecasts using quantile regression and simultaneously reconciling each quantile during model training,
	\item clear demonstration of superior model capabilities, especially on real e-commerce datasets with sparsity and skewed noise distributions.
\end{enumerate}

\paragraph{Background: Hierarchical Time Series Forecast}  Denote $b_t \in \mathbb{R}^m$, $a_t \in \mathbb{R}^k$ as the observations at time $t$ for the $m$ and $k$ series at the bottom and aggregation level(s), respectively. Then $y_t = S b_t \in \mathbb{R}^n$ contains observations at time $t$ for all levels. Similarly, let $\hat{b}_T(h)$ be the $h-$step ahead forecast on the bottom-level at time $T$, we can obtain forecasts in higher aggregation levels by computing $\hy_T(h) = S \hat{b}_T(h)$. This simple method is called bottom-up (BU), which guarantees reconciled forecasts. However, the error from bottom-level forecasts will accumulate to higher levels, leading to poor results. BU also cannot leverage any training data that is available at the more granular levels. A more straightforward approach called base forecast is to perform forecasting for each time series independently without considering the structure at all, i.e., compute $\hy_T(h) = \left[\hat{a}_T (h)^{\top} ~~ \hat{b}_T(h)^{\top}\right]^{\top}$, but this will apparently lead to irreconciled forecasts. Therefore, imposing constraints to revise the base forecasts is a natural choice to accommodate the hierarchical relationships. More specifically, the goal is to obtain some appropriately selected matrix $P \in \mathbb{R}^{m \times n}$ to combine the base forecasts linearly: $\ty_T(h) = SP\hy_T(h)$, where $\ty_T(h)$ is the reconciled forecasts which are now coherent by construction. The role of $P$ is to map the base forecasts into the forecasts at the most disaggregated level and sum them up by $S$ to get the reconciled forecasts. The previously mentioned approach \citep{ben2019regularized, hyndman2011optimal, hyndman2016fast, wickramasuriya2015forecasting} involves computing the optimal $P$ under different situations. A more detailed introduction can be found in Appendix \ref{sec:background}.
\section{Simultaneous Hierarchically Reconciled Quantile Forecasting} \label{dqhfn}
We propose a new method for hierarchical time-series forecasts. Our approach fundamentally differs from others in that we move the reconciliation into the training stage by enabling the model to simultaneously learn the time series data from adjacent aggregation levels, while also integrating quantile regression to provide coherent forecasts for uncertainty bounds. We call our method Simultaneous HierArchically Reconciled Quantile Regression (SHARQ) to highlight these properties.

\subsection{Problem Formulation} \label{problem_formulation}

\paragraph{Graph Structure} Figure \ref{fig:hierarchical_demo}(b) shows a hierarchical graph structure where each node represents a time series, which is to be predicted over a horizon. The graph structure is represented by $ \{ V, E \}$, where $V := \{v_1, v_2, \dots, v_n \} $ are the vertices of the graph and $E := \{ e_{i_1, j_1}, e_{i_2, j_2}, \dots, e_{i_p, j_p} \}$ are the set of edges. Also, $e_{i, j} \in \{-1, 1\}$ is a signed edge where $i$ is the parent vertex and $j$ is the child. An example of the negative-signed edge can be forecasting surplus production, which is the difference between production and demand. The time series for vertex $i$ at time $t$ is represented as $x_{v_i} (t)$. For sake of simplicity, we assume the value of a time series at a parent level will be the (signed)  sum of the children vertices. 
This constraint relationship can be represented as $x_{v_i} (t) = \sum_{e_{i, k} \in E} e_{i, k} ~ x_{v_k} (t).$ We can later extend these to set of non-linear constraints: $x_{v_i} (t) =H_{v_i} \left(  \sum_{e_{i, k} \in E} e_{i, k} ~ x_{v_k} (t) \right)$, where $ H_{v_i} \in \mathbb{C}^1$. But linear hierarchical aggregation has already covered most real-world applications. The graph has hierarchies $L := \{ l_1, l_2, \dots, l_i, \dots, l_q\}$, where $l_i$ is the set of all vertices belonging to the $i^{th}$ level of hierarchy. Note that the graph representation using $\{ V, E \}$ is equivalent to the $S$ matrix in defining a hierarchical structure.

\tikzstyle{circ} = [draw, circle, fill=white!20, radius=2.6, minimum size=.8cm, inner sep=0pt]
\tikzstyle{line} = [draw]


\paragraph{Data Fit Loss Function}
We now formulate the learning problem for each node. Let $\{ x_{v_i} (t) ~|~ t = 0, \dots, T \}$ be the training data for vertex $v_i$, $w$ be the window of the auto-regressive features, and $h$ be horizon of forecast. Based on the window, we can create a sample of training data in the time stamp $m$ as: 
\begin{align*}
    \{ (X_m^i, Y_m^i) ~|~ X_m^i & = [ F \left ( x_{v_i} (m), \dots, x_{v_i} (m - w + 1) \right ) ], \\
    Y_m^i & = [x_{v_i} (m + 1), \dots, x_{v_i} (m + h)]  \}, \notag
\end{align*}
where $F \colon \mathbb{R}^{w+1} \to \mathbb{R}^{n_f}$ is the featurization function, which will generate a set of features; $n_f$ is the size of feature space, $h$ is the horizon of the forecast. It can be noted that $n_f \geq \omega$. In this fashion, we transform the forecasting problem to a regression one, where $n_f$ and $h$ capture the size of the feature space and the response. For instance, we can create a set of features for an ARIMA$(p, d, q)$ model based on the standard parameterization, where auto-regressive window size $w=p$ and the other features corresponding to the $d, q$ parameters will form the rest of the features.

We represent a forecasting model that learns the mean as a point estimate. Denote function $g_i : \mathbb{R}^{n_f} \times \mathbb{R}^{n_\theta} \to \mathbb{R}^{d}$, where $n_\theta$ represents the number of model parameters which are represented as $\theta_i \in \mathbb{R}^{n_\theta}$. The estimate from the model for $X_m^i$ will be $\hat{Y}_m^i := g_i (X_m^i, \theta_i)$. We can define a loss function for the $m^{th}$ sample at the $i^{th}$ vertex as $\L (\hat{Y}_m^i, ~ Y_m^i) = \L (g_i (X_m^i, \theta_i), ~ Y_m^i)$. We would assume the noise in training samples are of \texttt{i.i.d.} nature. As a consequence, the loss for the entire training data will be sum for each sample, i.e., $\sum_m \L (g_i (X_m^i, \theta_i), Y_m^i)$. Noted that this formulation will work for neural networks or ARIMA but not for Gaussian Processes, where we need to model the covariance of the uncertainties across the samples.

\paragraph{Reconciled Point Forecast}
We then describe how to incorporate reconciliation constraints into the loss functions for the vertices. The constraints at different levels of hierarchy will have different weights, which we denote as a function $w_c: L \to \mathbb{R}$. We define another function that maps any vertex to the hierarchy level, $LM: V \to L$. For any vertex $v_i$, the corresponding weight for the constraint is given by  $ \lambda_{v_i} := w_c \circ LM (v_i)$. The constrained loss for vertex $v_i$ will be 
\begin{align} 
& \L_c (g_i (X_m^i, \theta_i), Y_m^i, g_k (X_m^k, \theta_k)) := \L (g_i (X_m^i, \theta_i), Y_m^i) \nonumber \\
& + \lambda_i \parallel g_i (X_m^i, \theta_i) - \sum_{e_{i, k} \in E} \left ( e_{i, k} ~g_k (X_m^k, \theta_k) \right ) \parallel^2 .
\label{contstrained_loss}
\end{align}  
Note that the data fit loss and the reconciliation, as described thus far, are catered to the point estimate of the mean forecasts.

\subsection{Reconciling Probabilistic Forecast using Quantiles} \label{sec_qloss}
Generating probabilistic forecasts over a range of time is significant for wide-ranging applications. Real-world time series data is usually sparse and not uniformly sampled. It is unreasonable to assume that the uncertainty or error distribution at every future point of time as Gaussians. A standard approach to solve this problem is using quantile loss, which allows one to model the error distribution in a non-parametric fashion. The estimator will aim at minimizing a loss directly represented in terms of the quantiles. Simultaneously, quantile can be used to construct confidence intervals by fitting multiple quantile regressors to obtain estimates of upper and lower bounds for prediction intervals. The quantile loss $\rho_{\tau} (y)$ is defined as
$\rho_{\tau} (y, ~ Q_{\tau}) = \left( y - Q_{\tau} \right). (\tau - \mathbb{I}_{(y<Q_{\tau})}),$
where $Q_{\tau}$ is the $\tau^{th}$ quantile output by an estimator. We will adopt quantiles in our framework, and pre-specified quantile ranges will represent the forecasting distribution. For simplicity, we denote $Q_i^{\tau} = Q^{\tau}_i (X_m^i, \theta_i)$ as the $\tau^{th}$ quantile estimator for node $i$. Eq.(\ref{quantile}) demonstrates a quantile version of the probabilistic estimator; it aims to produce multiple forecasts at different quantiles while maintaining a coherent median forecast which will be used to reconcile other quantiles:
\begin{align}
     & \L_c (Q_i, Y_m^i, Q_k) := \nonumber \\
     & \sum_{\tau = \tau_0}^{\tau_q}\rho_{\tau} (Q^{\tau}_i, Y_m^i) 
 + \lambda_i  \parallel Q^{50}_i - \sum_{e_{i, k} \in E} e_{i, k}  Q^{50}_i \parallel^2,
\label{quantile}
\end{align}
where $[\tau_0, \dots, \tau_q]$ are a set of quantile levels, and $Q_i = [Q^{\tau_0}_i, \dots, Q^{\tau_q}_i]$. To further guarantee that estimation at each quantile is coherent across the hierarchy, the straightforward solution is to add consistency regularization for each quantile like Eq.(\ref{quantile}). However, too many regularization terms would not only increase the number of hyper-parameters, but also complicate the loss function, where the result may even be worse as it is hard to find a solution to balance each objective. Moreover, a quantile estimator does not hold the additive property. As an example, assume $X_1 \sim N(\mu_1, \sigma_1^2)$, $X_2 \sim N(\mu_2, \sigma_2^2)$ are independent random variables, and define $Y = X_1 + X_2$. Then $Q_{Y}^{\tau} = Q_{X_1}^{\tau} + Q_{X_2}^{\tau}$ is true only if $X_1 = C \times X_2$, where $C$ is arbitrary constant. This requirement cannot be satisfied. But for any $\tau$, we have $(Q_{Y}^{\tau} - \mu_Y )^2 = (Q_{X_1}^{\tau} - \mu_{X_1})^2 + (Q_{X_2}^{\tau} - \mu_{X_2})^2$, the proof of above properties can be found in Appendix \ref{sec:nonadd}. Given these properties, we can formulate a new objective to make an arbitrary quantile consistent:
\begin{align}
    & \L_q (Q_i, Y_m^i, Q_k) := \label{quantile_recon} \\
    & \left [f\left(Q^{\tau}_i - Q^{50}_i\right) 
 	 - \sum_{e_{i, k} \in E} e_{i, k} ~ f\left(Q^{\tau}_k - Q^{50}_k\right) + \mathrm{Var}(\epsilon)\right ]^2, \nonumber
\end{align}
where $\epsilon$ is the mean forecast's inconsistency error, which is mostly a much smaller term for a non-sparse dataset, and $f$ is the distance metric. Eq.(\ref{quantile_recon}) is zero when $f$ is a squared function and the given data satisfies \texttt{i.i.d.} Gaussian assumption. Therefore,  optimizing Eq.(\ref{quantile_recon}) ``forces'' this additive property in non-Gaussian cases and is equivalent to reconcile the quantile estimators, which can also be interpreted as reconciliation over the variance across adjacent aggregation levels. Empirically, this approach calibrates multiple quantile predictions to be coherent and mitigates the quantile crossing issue \citep{liu2009stepwise}.

\begin{algorithm}[t]
\small
\caption{SHARQ}
\label{alg:bu}
\begin{algorithmic}
\STATE \textbf{Input}:~~Training data $\mathcal{I}_1 = \{X_i, Y_i\}_{i=1}^{T_1}$, testing data $\mathcal{I}_2 = \{X_i, Y_i\}_{i=1}^{T_2}$.
\STATE \textbf{Process}:
\STATE train each leaf node (e.g., $v_4$ to $v_7$ in Figure \ref{fig:hierarchical_demo}(b)) independently without regularization
\FOR{each vertex $v$ at upper level $l$ (e.g., $l = 2$ for $v_3$, $v_2$, then $l = 1$ for $v_1$)}
\STATE train vertex $v$ at level $l$ using Eq.(\ref{quantile})
\ENDFOR
\STATE Reconciled Median Forecast \textbf{MF} $\leftarrow$ \textbf{Models}($\mathcal{I}_2$)
\FOR{each vertex $v$ at upper level $l$}
\STATE train vertex $v$ at level $l$ using Eq.(\ref{quantile_recon}) and \textbf{MF}
\ENDFOR
\STATE \textbf{Output}: ~ Reconciled forecasts at pre-specified quantiles.
\end{algorithmic}
\end{algorithm}

\subsection{SHARQ Algorithm}
Our formulation can be combined with a bottom-up training approach to reconciling forecasts for each quantile simultaneously. Since the time series at the leaf (disaggregated) level are independent of higher aggregation levels, we can use the lower-level forecasting results to progressively reconcile the forecasting models at higher levels without revisiting previous reconciliations, till the root is reached. In contrast, if top-down training is applied, one needs to reconcile both higher (previously visited) and lower-level data at an intermediate vertex, since other time series at that intermediate level may have changed. Algorithm \ref{alg:bu} describes our procedure. We now address the remaining aspects.

\paragraph{Beyond Gaussian Noise Assumption.} Noise distributions for many real-world datasets (e.g., e-commerce data) are heavily skewed, so a Gaussian model may not be appropriate. For multi-level time series, data at the most disaggregated (lowest) level is more likely to be sparse, which makes the quantile reconciliation in Eq.(\ref{quantile_recon}) less accurate. In such situations, one can substitute median with the mean estimator as in Eq.(\ref{contstrained_loss}) for training lower-level time series. One can also mix-and-match between mean and median estimators at higher aggregation levels depending on the data characteristics. Finding a suitable function $f$ for quantile reconciliation as in Eq.(\ref{quantile_recon}) is an alternative way to tackle  non-symmetric errors \citep{li20081}. 
    
\paragraph{Efficient Training and Inference} SHARQ is time and memory efficient, scaling well in both aspects with large datasets. One can simultaneously train multiple time series and keep a running sum for reconciliation. Since coherent probabilistic forecasts are enforced during training, no extra post-processing time is needed (see Appendix \ref{sec:add_exp} for details). Besides, SHARQ does not force one to use deep forecasting models or to use the same type of model at each node; in fact, any model where gradient-based optimization can be used is allowable, and one can also mix-and-match. For cases where the time series at a given level are structurally very similar \citep{zhu2017deep}, they can be grouped (e.g., by clustering), and a single model can be learned for the entire group.




\newcommand\Item[1][]{%
  \ifx\relax#1\relax  \item \else \item[#1] \fi
  \abovedisplayskip=0pt\abovedisplayshortskip=0pt~\vspace*{-\baselineskip}}
  
\section{Statistical Analysis of SHARQ} \label{theory}
In this section, we theoretically demonstrate the advantages of SHARQ. We begin by showing the optimality of our formulation (\ref{contstrained_loss}) in contrast to post-inference based methods, which solves the matrix $P$ for $\ty_T(h) = SP\hy_T(h)$. We emphasize our advantage in balancing coherency requirements and forecasting accuracy. We then present some desirable statistical properties of SHARQ. 

\begin{thm}\textbf{(Global Optimum)} \label{thm:loss}
For $L_c \in C^1$, for an arbitrary parameterized smooth regressor model asymptotically,
\begin{align} 
&\L_c (g_i (X_m^i, \theta^{\star}_i), ~ Y_m^i, ~~ g_k (X_m^k, \theta^{\star}_k)) \nonumber \\
&\le \L_c (g_i (X_m^i, \theta^{recon}_i), ~ Y_m^i, ~~ g_k (X_m^k, \theta^{recon}_k))
\end{align}
where $\theta^{\star}$, $\theta^{recon}$ are the parameters for SHARQ, and post inference reconciled solution, respectively. 
\end{thm}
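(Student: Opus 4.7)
The plan is to exploit the fact that SHARQ, by construction, jointly minimizes the regularized loss $\L_c$, whereas the post-inference pipeline only minimizes the plain data-fit loss $\L$ at each node and then applies a separate linear reconciliation map. Consequently the inequality should reduce to a ``global minimizer beats any feasible competitor'' statement, with the real content being a careful accounting of which parameters each procedure actually lands at.

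Concretely, I would first formalize the SHARQ estimator as the joint minimizer
\[
(\theta^{\star}_i,\theta^{\star}_k) \in \arg\min_{\theta_i,\theta_k}\; \sum_m \L_c\bigl(g_i(X_m^i,\theta_i),\,Y_m^i,\,g_k(X_m^k,\theta_k)\bigr),
\]
which is well-defined thanks to the $C^1$ assumption on $\L_c$ and the smoothness of the regressors $g_i,g_k$. Next I would characterize $\theta^{recon}$: each node $i$ is first fitted independently via $\hat\theta_i\in\arg\min_{\theta_i}\sum_m\L(g_i(X_m^i,\theta_i),Y_m^i)$, and then the linear reconciliation map $\ty_T(h)=SP\hy_T(h)$ is imposed, inducing an effective parameter vector $\theta^{recon}$ in the same ambient parameter space. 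Because the reconciliation stage is blind to the penalty $\lambda_i\|g_i-\sum_{e_{i,k}\in E} e_{i,k}g_k\|^2$ embedded inside $\L_c$, the resulting $\theta^{recon}$ is just a particular feasible point of the joint problem and has no reason to minimize $\L_c$.

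The conclusion then follows immediately from ``minimum $\le$ any other value'':
\[
\L_c(\theta^{\star}) \;=\; \min_{\theta}\L_c(\theta) \;\le\; \L_c(\theta^{recon}).
\]

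The main obstacle, signaled by the ``asymptotically'' qualifier, is bridging the gap between this argmin characterization and what SHARQ's gradient-based optimizer actually returns when $g$ is non-convex (e.g.\ a deep network), since then only local minima are guaranteed in finite time. To close this gap rigorously I would either invoke the $C^1$ smoothness together with standard over-parameterized/NTK-type guarantees that gradient descent asymptotically attains a global minimum of $\L_c$, or specialize to convex-loss settings such as quantile or squared loss with a linearly parameterized regressor, where the global minimum is reached exactly and the inequality holds unconditionally. In either case the logical core is a separation of concerns: SHARQ optimizes the correct regularized objective from the outset, while the post-inference pipeline optimizes a surrogate and patches it, so by definition SHARQ does no worse on $\L_c$.
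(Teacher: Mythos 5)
Your proposal takes essentially the same route as the paper: the paper's entire proof is the one-line observation that $\theta^{\star}$ is by definition the minimizer of $\L_c$, so its value can be no larger than $\L_c$ evaluated at the post-inference parameters $\theta^{recon}$. You actually go further than the paper by explicitly flagging the two points it leaves implicit --- that $\theta^{recon}$ must be interpretable as a feasible point in the same parameter space, and that the ``asymptotically'' qualifier is needed to bridge gradient-based training and global minimization in non-convex settings.
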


\begin{proof}
By definition, SHARQ directly minimize Eq.(\ref{contstrained_loss}),
\begin{equation}
    \theta^{\star} = \arg \underset{\theta}{\min} ~\L_c (g_i (X_m^i, \theta_i), ~ Y_m^i, ~~ g_k (X_m^k, \theta_k)),
\end{equation}
where $\theta = \{\theta_i, \theta_k\}$. 
\end{proof}
Since \citet{ben2019regularized, hyndman2011optimal, hyndman2016fast, wickramasuriya2015forecasting} are performing the reconciliation as a post-processing step, those solutions are bound to be sub-optimal in comparison with $\theta^{\star}$.
\begin{pro}\textbf{(Hard Constraint)} \label{thm:stats}
For post-inference based methods, $P \hat{y}_T (h)$ is the bottom level reconciled forecast. In other words, it requires that 
\begin{equation}
    g_i (X_m^i, \theta^{\star}_i) = \sum_{e_{i, k} \in E} g_k (X_m^k, \theta^{recon}_k).
    \label{equ:stats_pro}
\end{equation}
Had we only considered the point forecast reconciliation, i.e. $\E[g_i (X_m^i, \theta^{\star}_i)] = \sum_{e_{i, k} \in E} \E[g_k (X_m^k, \theta^{recon}_k)],$ the post inference processing still might have worked. However, due to the probabilistic nature of the variables $g_i (X_m^i, \theta^{\star}_i) = \E[g_i (X_m^i, \theta^{\star}_i)] + \varepsilon_i$, where $\varepsilon_i$ is the observation noise, reconciling the mean won't suffice.
\end{pro}
\paragraph{Remark}  To satisfy Eq.(\ref{equ:stats_pro}), it is required that $\varepsilon_i = \sum_{e_{i, k} \in E} \varepsilon_k$, which is not a realistic property to be satisfied in real-life problems. Intuitively, when a reconciliation matrix $P$ is applied, the original, unbiased base forecasts with variation are ``forced'' to be summed up. However, our method does not impose this hard constraint, leading to different properties.

\begin{figure*}[t]
\centering
{
\setlength{\tabcolsep}{1pt} 
\renewcommand{\arraystretch}{1} 
\begin{tabu}{ccc}
\hspace{-0.45em}
\includegraphics[width=.33\textwidth]{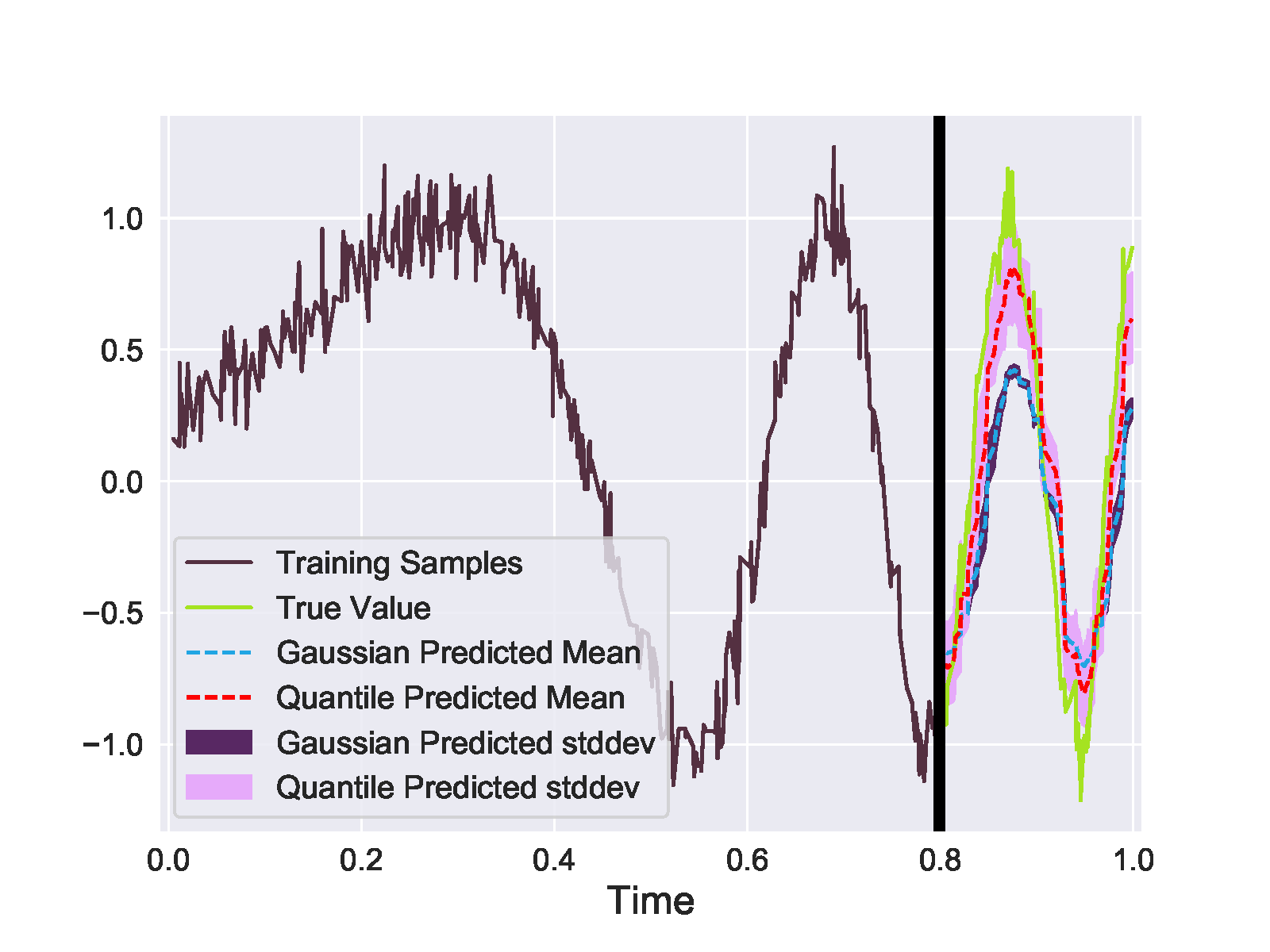}&
\hspace{-0.45em}
\includegraphics[width=.33\textwidth]{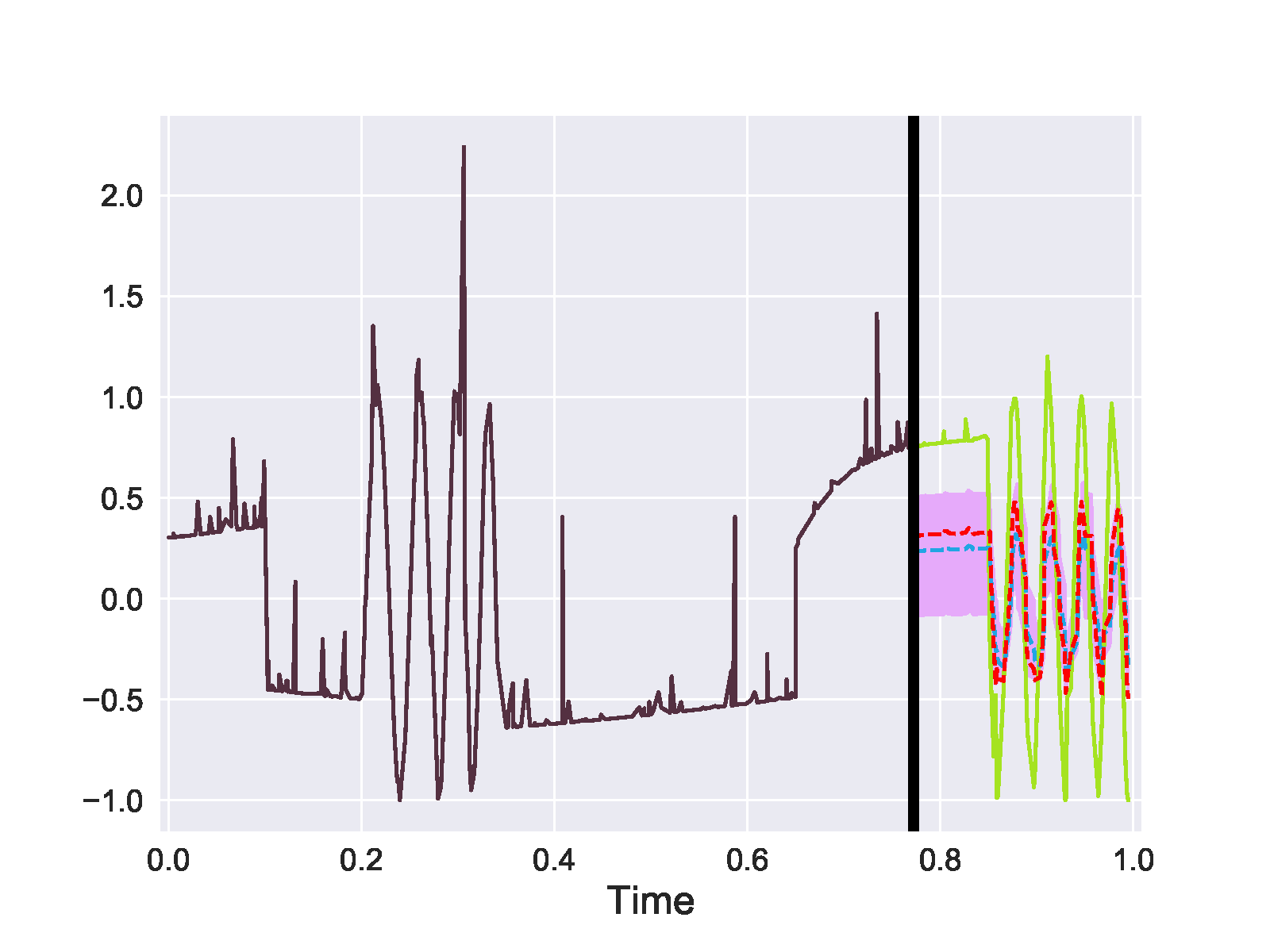}&
\hspace{-0.45em}
\includegraphics[width=.33\textwidth]{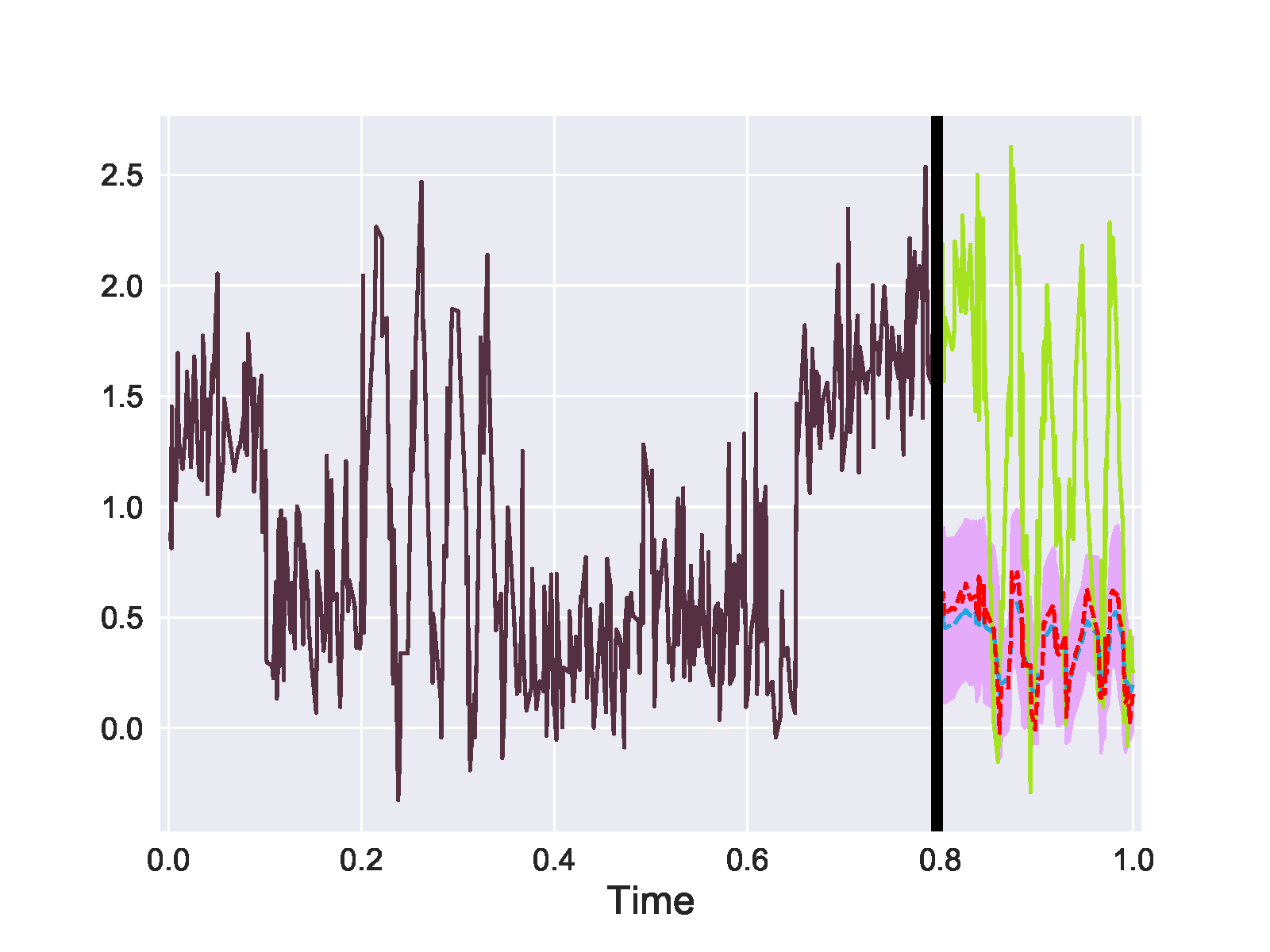} \\
\hspace{-0.45em} (a) Log-normal error & \hspace{-0.45em} (b) Gamma error & \hspace{-0.45em} (c) Gaussian error \\
\end{tabu}
}
\caption{Forecasting results of simulated sequential data under different error distributions. The original function of (a) is sinusoidal with varying frequency; (b) and (c) are discontinuous step functions. Note that our baseline forecasts in (b) and (c) are overconfident \citep{lakshminarayanan2017simple, li2020improving} and their prediction intervals are too small to be shown.}
\label{fig:sim_plot}
\end{figure*}

\usetikzlibrary{fit,backgrounds} 
\usetikzlibrary{shadows.blur}
\usetikzlibrary{shapes,arrows}

\tikzstyle{rect} = [draw, rectangle, fill=white!20, text width=2em, text centered, minimum height=2em]
\tikzstyle{longrect} = [draw, rectangle, fill=white!20, text width=6em, text centered, minimum height=2em]
\tikzstyle{circ} = [draw, circle, fill=white!20, radius=2.6, minimum size=.8cm, inner sep=0pt]
\tikzstyle{line} = [draw, -latex']

\begin{table*}[t]
\centering
\renewcommand\arraystretch{1.7}
\begin{minipage}{.41\textwidth}
\scalebox{0.77}{
\begin{tabular}{c|c|c|c|c}
\hlinewd{1.3pt}
\multicolumn{2}{c}{ Error distribution } & Log-normal & Gamma & Gaussian \\ \hline
\multirow{2}{*}{ MAPE } & Quantile & \textbf{32.72} & \textbf{62.20} & \textbf{70.29} \\
& Baseline & 49.88 & 73.69 & 73.36 \\ \hline
\multirow{2}{*}{ LR } & Quantile & \textbf{0.2218} & \textbf{0.5649} & \textbf{0.8634} \\
& Baseline & 0.5661 & 0.7489 & 1.025 \\
\hlinewd{1.3pt}
\end{tabular}}
\end{minipage}
\begin{minipage}{.57\textwidth}
\centering
    \begin{tikzpicture}[node distance = 1.2cm,auto]
    \node [rect, rounded corners, label={[font=\small\sffamily,name=label1]above:{Forecasting model}}] (step1) {$h_{t-2}$};
    \node [rect, right of=step1, rounded corners] (step2) {$h_{t-1}$};
    \node [rect, right of=step2, rounded corners] (step4) {$h_t$};
    \node [rect, right of=step4, rounded corners] (step8) {$o_t$};
    \node [circ, below of=step1] (step5) {$x_{t-2}$};
    \node [circ, below of=step2] (step6) {$x_{t-1}$};
    \node [circ, below of=step4] (step7) {$x_t$};
    \node [longrect, above right = 0.2cm and 0.3cm of step8] (step9) {dense layer 1};
    \node [longrect, below right = 0.2cm and 0.3cm of step8] (step10) {dense layer 3};
    \node [longrect, right =0.25cm of step8] (step12) {dense layer 2};
    \node [circ, right =0.2cm of step12] (step11) {$\hat{y}^{\tau}_{t+1}$};
    \path [line] (step1) -- (step2);
    \path [line] (step2) -- (step4);
    \path [line] (step5) -- (step1);
    \path [line] (step6) -- (step2);
    \path [line] (step7) -- (step4);
    \path [line] (step4) -- (step8);
    \path [line, rounded corners] (step8) |- (step9);
    \path [line, rounded corners] (step8) |- (step10);
    \path [line, rounded corners] (step9) -| (step11);
    \path [line, rounded corners] (step10) -| (step11);
    \path [line] (step8) -- (step12);
    \path [line] (step12) -- (step11);
    \begin{scope}[on background layer]
    \node[draw,dashed,black,rounded corners,fill=gray!50,fit=(step1) (step2) (step4) (step5) (step6) (step7) (label1)]{};
    \end{scope}
    \end{tikzpicture}
\end{minipage}
\caption{Left: Quantitative results for simulation experiments. We use Mean Absolute Percentage Error (MAPE) \citep{makridakis2000m3} to measure forecasting accuracy and Likelihood Ratio (LR) to evaluate uncertainty intervals. Right: A schematic of the multi-quantile forecaster used for simulation.}
\label{tab:sim_res}
\end{table*}

\begin{pro}\textbf{(Unbiasedness Property)}
Consider a hierarchical structure with $n$ nodes where the first $\kappa$ nodes belong to aggregation levels, assume that the bottom level forecasting is unbiased:
\begin{equation}
    \E [g_k (X_m^k, \theta_k)] = Y_m^k, \quad k = \kappa + 1,..., n
\end{equation}
and the bottom level forecasting models are well optimized:
\begin{equation}
     \mathrm{Var}\left(\|g_k (X_m^k, \theta_k) - Y_m^k\|\right) = \epsilon, \quad
     \epsilon = \mathcal{O}(\frac{1}{m}).
\end{equation}
Then we have that
\begin{align*}
    \E[g_i (X_m^i, \theta_i)] & = \E[\sum_{e_{i, k} \in E} g_k (X_m^k, \theta_k)] \nonumber \\
    & = \sum_{e_{i, k} \in E} \E[g_k (X_m^k, \theta_k)] \nonumber \\
    & = Y_m^i, \quad i = 1, ..., \kappa
\end{align*}
\end{pro}
Therefore, we claim that given the unbiased base forecast at the most disaggregated level, as well as well-specified models, our method can provide unbiased estimation at all aggregation levels.
\begin{pro}\textbf{(Variance Reduction)}
Assume we are minimizing a quadratic loss function using our formulation, where
\begin{align*}
& L_c (g_i (X_m^i, \theta_i), Y_m^i, g_k (X_m^k, \theta_k)) = \|g_i (X_m^i, \theta_i) - Y_m^i\|^2 \nonumber \\
& + \lambda_i  \parallel g_i (X_m^i, \theta_i) - \sum_{e_{i, k} \in E} \left ( e_{i, k} ~g_k (X_m^k, \theta_k)  \right ) \parallel^2.
\end{align*}
By solving the quadratic objective, we get
\begin{equation}
    g_i (X_m^i, \theta_i) = \frac{Y_m^i + \lambda_i \sum_{e_{i, k} \in E} e_{i, k} ~g_k (X_m^k, \theta_k)}{\lambda_i + 1}.
\end{equation}
Note that if we fit linear models that generalize in the bottom level, we have $\mathrm{Var} (\sum_{e_{i, k} \in E} e_{i, k} ~g_k (X_m^k, \theta_k)) = \mathcal{O}(\frac{1}{m})$ (for other models, the variance should be at least in a smaller scale than $\mathcal{O}(1)$, which is the variance of observed samples). Therefore, by alternating $\lambda_i$:
\begin{itemize}
    \item $\mathrm{Var}(g_i (X_m^i, \theta_i)) = \mathrm{Var}(Y_m^i) = \mathcal{O}(1), \mathrm{when} ~\lambda_i \rightarrow 0$.
    \Item
    \begin{align}\hspace{-1.75em}
    	\mathrm{Var}(g_i(X_m^i, \theta_i)) & = \mathrm{Var}(\sum_{e_{i, k} \in E} e_{i, k} ~ g_k (X_m^k, \theta_k)) \nonumber \\
    	& = \mathcal{O}(\frac{1}{m}), \mathrm{when} ~ \lambda_i \rightarrow \infty.
    \end{align}
\end{itemize}
\end{pro}

This tells us that by alternating the coefficient $\lambda_i$, the amount of estimator variance at higher aggregation levels can be controlled. If lower-level models are accurate, then we can improve the higher-level models by this method. Instead of adding a hard coherency requirement like the post-inference methods, SHARQ provides more flexibility for controlling the variations.
\section{Experimental Results} \label{experimental_results}
In this section, we validate the performance of SHARQ on multiple hierarchical time series datasets with different properties and use cases. The experiments are conducted on both simulated and real-world data. Results demonstrate that SHARQ can generate coherent and accurate forecasts and well-capture the prediction uncertainty at any specified level.

\begin{table*}[t!]
\centering
\renewcommand\arraystretch{1.2}
\caption{Performance measured by MAPE \citep{makridakis2000m3} on Australian Labour (755 time series), and M5 competition (42840 time series), lower values are better. Level 1 is the top aggregation level, and 4 is the bottom level.}
\scalebox{0.7}{
\begin{tabular}{c|cccc|cccc|cccc|cccc}
\hlinewd{1.5pt}
Algorithm & \multicolumn{4}{c}{ RNN } & \multicolumn{4}{c}{ Autoregressive } & \multicolumn{4}{c}{ LST-Skip } & \multicolumn{4}{c}{ N-Beats } \\
\hline
\multirow{2}{*}{ Reconciliation } & \multicolumn{4}{c}{ Level } & \multicolumn{4}{c}{ Level } & \multicolumn{4}{c}{ Level } & \multicolumn{4}{c}{ Level } \\ \hhline{|~|----|----|----|----|}
& 1 & 2 & 3 & 4 & 1 & 2 & 3 & 4 & 1 & 2 & 3 & 4 & 1 & 2 & 3 & 4 \\ \hline
BU & 15.23 & 15.88 & 19.41 & \textbf{17.96} & 19.29 & 20.14 & 21.09 & 22.13 & 16.13 & 17.59 & 16.88 & \textbf{17.17} & 14.23 & 14.75 & 15.67 & \textbf{15.84} \\
Base & 12.89 & 14.26 & 16.96 & \textbf{17.96} & \textbf{17.59} & 19.86 & 20.98 & 22.13 & 14.99 & 12.31 & \textbf{15.12} & \textbf{17.17} & 12.18 & 13.32 & \textbf{14.32} & \textbf{15.84} \\
MinT-sam & 14.98 & 15.94 & 17.79 & 19.23 & 18.82 & 19.98 & 21.59 & 22.26 & 15.12 & 14.41 & 16.42 & 18.62 & 13.11 & 14.63 & 14.86 & 15.96 \\
MinT-shr & 14.46 & 15.43 & 16.94 & 18.75 & 18.54 & 19.98 & 21.22 & 22.01 & 15.06 & 13.89 & 16.31 & 17.56 & 12.76 & 14.41 & 14.77 & 15.87 \\
MinT-ols & 15.01 & 15.96 & 18.75 & 19.21 & 19.14 & 20.02 & 21.74 & 22.34 & 15.12 & 14.41 & 16.41 & 18.74 & 13.29 & 14.49 & 14.85 & 16.83 \\
ERM & 14.73 & 16.62 & 19.51 & 20.13 & 17.89 & 20.11 & 20.33 & \textbf{21.93} & 16.61 & 16.84 & 18.75 & 19.21 & 14.52 & 15.26 & 17.02 & 17.29 \\
SHARQ & \textbf{12.55} & \textbf{13.21} & \textbf{16.01} & \textbf{17.96} & 17.65 & \textbf{19.72} & \textbf{20.01} & 22.13 & \textbf{11.97} & \textbf{12.24} & 15.64 & \textbf{17.17} & \textbf{11.86} & \textbf{12.35} & 14.53 & \textbf{15.84} \\ \hhline{|=|====|====|====|====|}
BU & 11.42 & 12.04 & 12.32 & \textbf{11.72} & 12.77 & 14.59 & 16.11 & \textbf{16.56} & 10.11 & 12.69 & 10.78 & \textbf{10.94} & 11.01 & 11.05 & 12.43 & \textbf{11.34} \\
Base & 10.63 & 10.15 & 11.23 & \textbf{11.72} & 11.64 & 13.91 & 15.67 & \textbf{16.56} & 8.96 & 11.38 & 10.59 & \textbf{10.94} & 9.64 & 9.88 & 11.11 & \textbf{11.34} \\
MinT-sam & 11.25 & 11.67 & 11.87 & 12.99 & 12.34 & 14.09 & 15.97 & 17.54 & 9.64 & 12.31 & 11.02 & 11.01 & 9.97 & 10.82 & 11.89 & 12.77 \\
MinT-shr & 10.76 & 11.03 & 11.49 & 12.81 & 11.92 & 13.85 & 15.76 & 17.33 & 9.19 & 11.97 & 10.71 & 10.99 & 9.78 & 10.69 & 11.56 & 12.63 \\
MinT-ols & 11.75 & 11.56 & 12.06 & 13.05 & 12.32 & 14.21 & 15.97 & 17.56 & 9.63 & 12.54 & 10.98 & 11.02 & 10.41 & 11.01 & 12.02 & 12.71 \\
ERM & 11.86 & 12.01 & 12.42 & 13.54 & 12.61 & 14.02 & \textbf{15.41} & 17.14 & 10.35 & 13.01 & 13.15 & 13.56 & 10.44 & 11.22 & 13.42 & 13.96 \\
SHARQ & \textbf{9.87} & \textbf{9.68} & \textbf{10.41} & \textbf{11.72} & \textbf{11.23} & \textbf{13.84} & 15.69 & \textbf{16.56} & \textbf{8.68} & \textbf{9.49} & \textbf{10.23} & \textbf{10.94} & \textbf{9.67} & \textbf{9.76} & \textbf{10.75} & \textbf{11.34} \\
\hlinewd{1.5pt}
\end{tabular}\hspace{5em}}
\label{tab:mape}
\end{table*}

\subsection{Simulation Experiments}

We first demonstrate that quantile loss can handle various kinds of error distributions and thus provide more stable and accurate uncertainty estimation than methods under Gaussian error assumption. We trained vanilla RNN models on three different sequential data with distinct error distributions. We implemented a model that has multiple quantile estimators with shared features, which enables more efficient training. The bagging method \citep{oliveira2014ensembles} is used as a baseline where we utilize model ensembles to produce confidence intervals. Figure \ref{fig:sim_plot} shows the advantage of quantile estimators on simulated sequential data, and Table \ref{tab:sim_res} compares forecasting results as well as demonstrates our model structure. Although it is difficult to capture the trend of discontinuous functions in Figure \ref{fig:sim_plot} (b) and (c), the quantile estimators are accurate and stable under both skewed and symmetric error distributions, where it also outperforms the baseline for all types of error distributions.

\subsection{Hierarchical Time Series}
We validate the performance of SHARQ on multiple real-world hierarchical time-series datasets, which include Australian Labour, FTSE \citep{doherty2005hierarchical}, M5 competition, and Wikipedia webpage views dataset (see Appendix \ref{sec:data} for details). This type of data usually contains categorical features (e.g., locations, genders) that can be used to aggregate across time series to construct hierarchical structures. We compare our method with state-of-the-art reconciliation algorithms MinT \citep{wickramasuriya2019optimal} and ERM \citep{ben2019regularized}, along with other baselines, including bottom-up (BU) and base forecast. To have a fair comparison, we first pre-process each dataset using information from categorical features. The bottom-up training procedure in Algorithm \ref{alg:bu} is then used for each method except for BU. Specifically, the model training settings of the base forecast, MinT and ERM are by default the same as SHARQ, except that they do not have mean and quantile reconciliation. As for MinT and ERM, extra reconciliations are performed after model training. In this case, the algorithm has access to the hierarchical information about the dataset. We also incorporate different time series forecasting algorithms into our framework, which ranges from linear auto-regressive model and RNN-GRU \citep{chung2014empirical} to advanced models such as LSTNet \citep{lai2018modeling} and N-Beats \citep{oreshkin2019n}. Although these models are not originally designed for hierarchical time series problems, we show that the performance on this task can be improved under our framework. 

Table \ref{tab:mape} and \ref{tab:crps} shows forecasting results across all reconciliation methods and models on Australian Labour (upper) and M5 (lower) dataset, the results are averaged across 3 runs. Specifically, MAPE \citep{makridakis2000m3} measures accuracy for point forecast by Eq.(\ref{contstrained_loss}), and $Continuous ~Ranked ~Probability ~Score$ (CRPS) \citep{matheson1976scoring} measures the holistic accuracy of a probabilistic forecast, using multiple quantiles. Overall, SHARQ outperforms other reconciliation baselines, resulting in much lower MAPE and CRPS over all four models, particularly at the higher aggregation levels. Specifically, although the bottom-up training of SHARQ leads to the same bottom level performance as BU and Base method, the error accumulation and inconsistency across the hierarchy leads to higher error in other aggregation levels. More importantly, the better performance of SHARQ over Base and BU in multiple datasets validates the necessity of hierarchical construction in DNN training. Besides, comparing the autoregressive model results with others, SHARQ tends to perform better when the forecasting model is less parsimonious for the dataset. Figure \ref{fig:eight_fig} presents multi-step forecasting results, which possess the advantage of coherent estimation at multiple quantile levels. 

\begin{figure*}[t!]
    \begin{minipage}{\textwidth}
    \centering
    \begin{tabular}{@{\hspace{-2.4ex}} c @{\hspace{-2.4ex}} @{\hspace{-2.4ex}} c @{\hspace{-2.4ex}} @{\hspace{-2.4ex}} c @{\hspace{-2.4ex}} @{\hspace{-2.4ex}} c @{\hspace{-2.4ex}}}
        \begin{tabular}{c}
        \includegraphics[width=.276\textwidth]{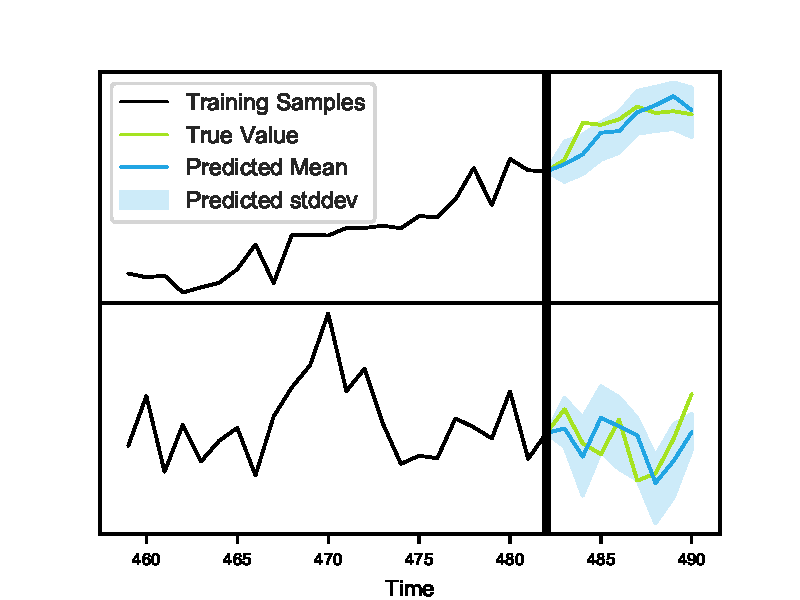}
        \\
        {\small{(a) Labour (SHARQ)}}
        \end{tabular} &
        \begin{tabular}{c}
        \includegraphics[width=.276\textwidth]{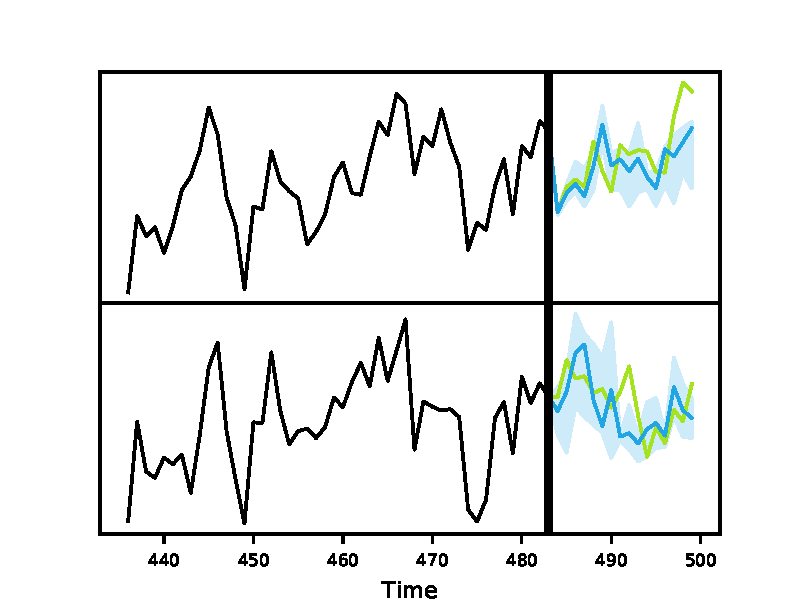}
        \\
        {\small{(b) FTSE (SHARQ)}}
        \end{tabular} & 
        \begin{tabular}{c}
        \includegraphics[width=.276\textwidth]{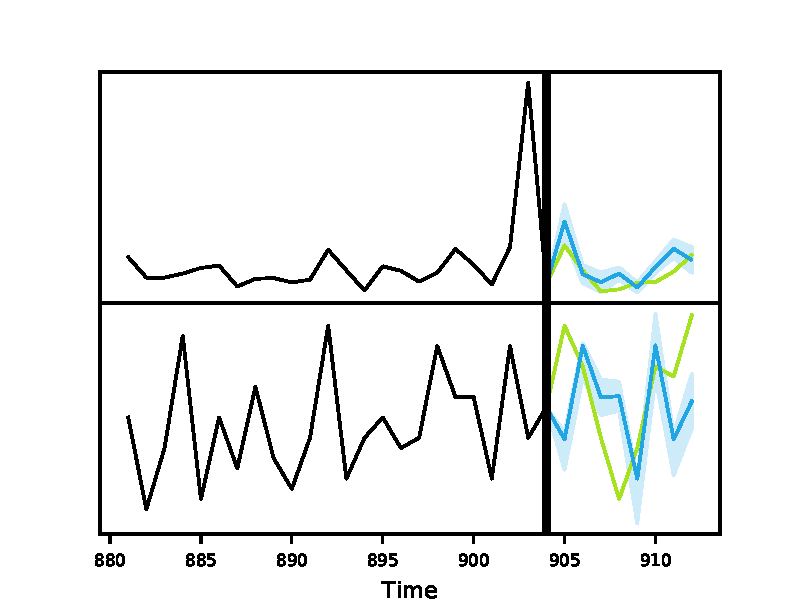} 
        \\
        {\small{(c) M5 (SHARQ)}}
        \end{tabular} &
        \begin{tabular}{c}
        \includegraphics[width=.276\textwidth]{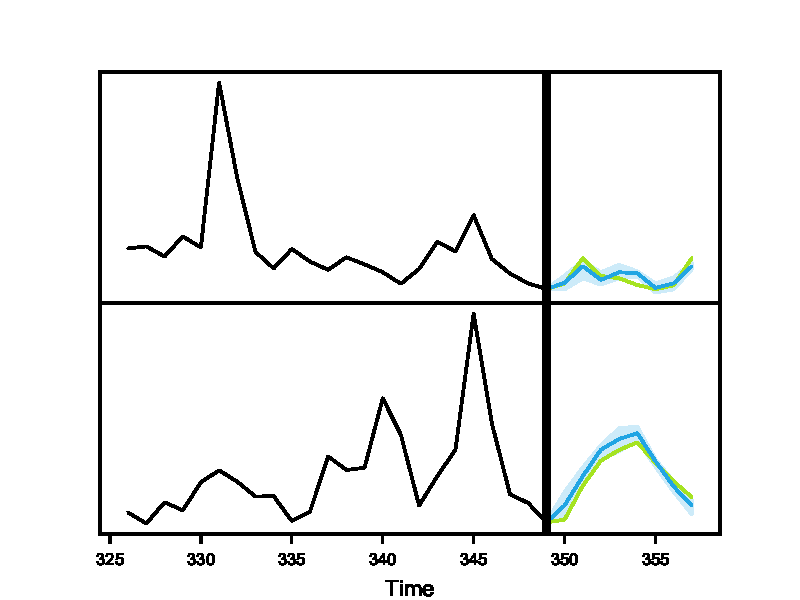}
        \\
        {\small{(d) Wiki (SHARQ)}}
        \end{tabular} \\
        \begin{tabular}{c}
        \includegraphics[width=.276\textwidth]{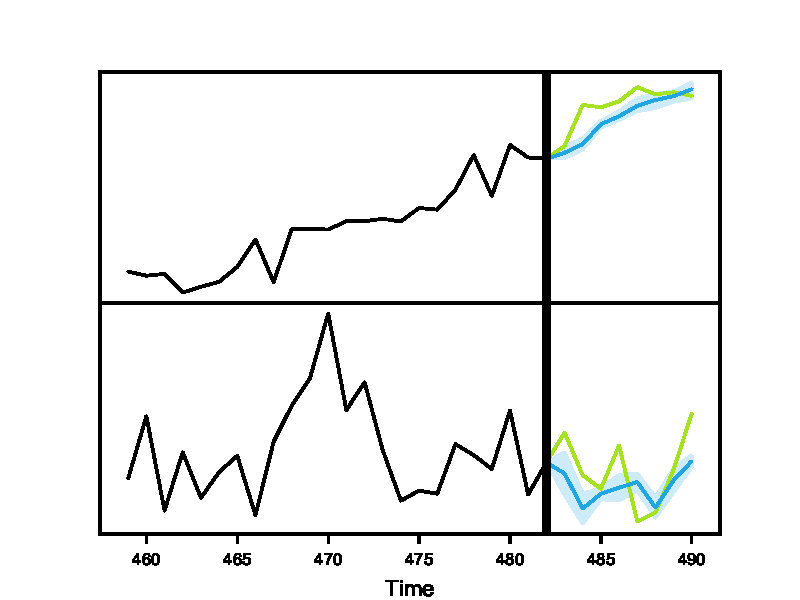}
        \\
        {\small{(a) Labour (MinT-shr)}}
        \end{tabular} &
        \begin{tabular}{c}
        \includegraphics[width=.276\textwidth]{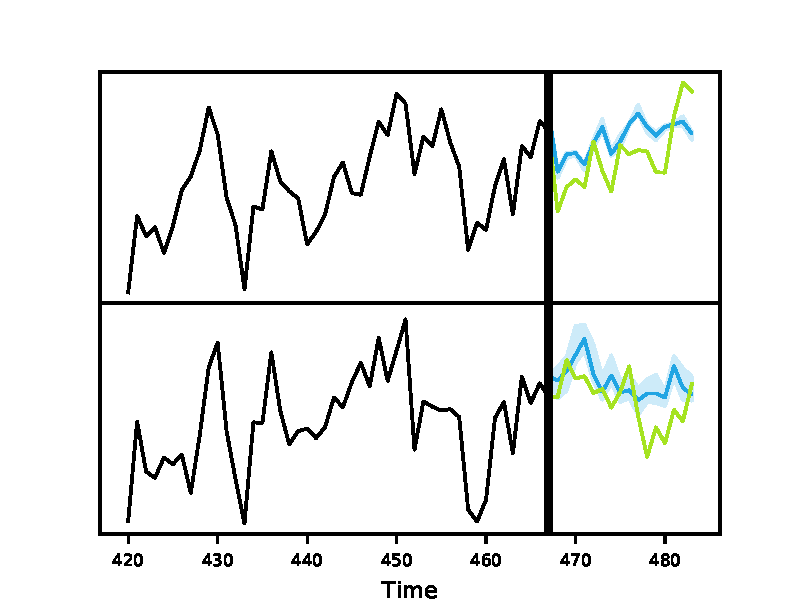}
        \\
        {\small{(b) FTSE (MinT-ols)}}
        \end{tabular} & 
        \begin{tabular}{c}
        \includegraphics[width=.276\textwidth]{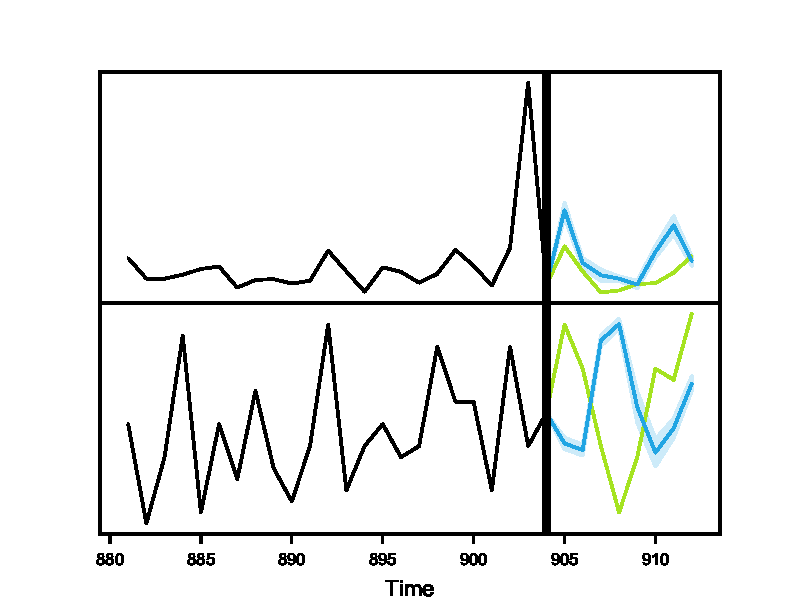} 
        \\
        {\small{(c) M5 (MinT-shr)}}
        \end{tabular} &
        \begin{tabular}{c}
        \includegraphics[width=.276\textwidth]{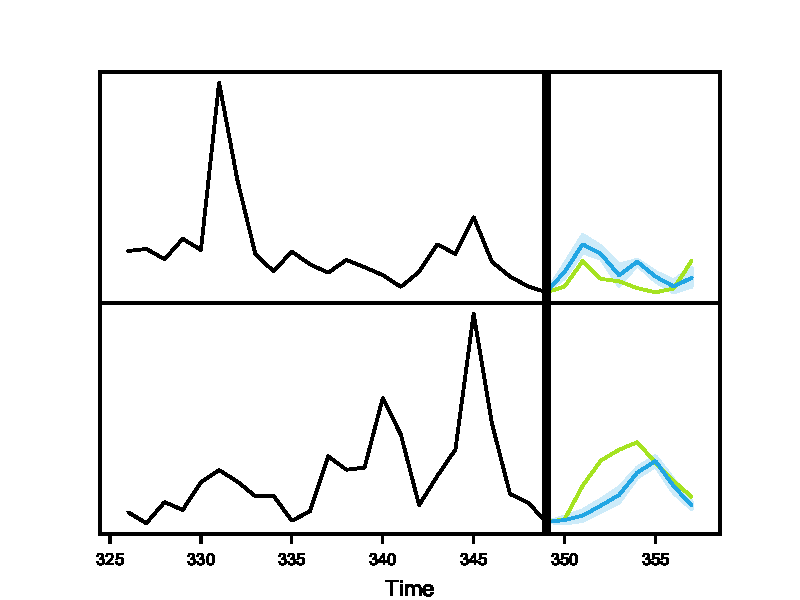}
        \\
        {\small{(d) Wiki (MinT-sam)}}
        \end{tabular} \\
        \end{tabular}
    \end{minipage}
    \caption{Top and bottom level forecasts on four datasets using the LSTNet skip connection model. For each dataset, we plot the results of SHARQ and the second-best reconciliation method. P5 and P95 forecasts are the lower and upper boundaries of the forecast band. We use mean as the point estimator (also complement of P50) for all bottom-level data and other aggregation levels of Australian Labour and FTSE data.}
    \label{fig:eight_fig}
\end{figure*}

\begin{table*}[t!]
\centering
\renewcommand\arraystretch{1.2}
\scalebox{0.7}{
\begin{tabular}{c|cccc|cccc|cccc|cccc}
\hlinewd{1.5pt}
Algorithm & \multicolumn{4}{c}{ RNN } & \multicolumn{4}{c}{ Autoregressive } & \multicolumn{4}{c}{ LST-Skip } & \multicolumn{4}{c}{ N-Beats } \\
\hline
\multirow{2}{*}{ Reconciliation } & \multicolumn{4}{c}{ Level } & \multicolumn{4}{c}{ Level } & \multicolumn{4}{c}{ Level } & \multicolumn{4}{c}{ Level } \\
\hhline{|~|----|----|----|----|}
& 1 & 2 & 3 & 4 & 1 & 2 & 3 & 4 & 1 & 2 & 3 & 4 & 1 & 2 & 3 & 4 \\ \hline
BU & 0.244 & 0.221 & 0.186 & \textbf{0.149} & 0.401 & 0.367 & 0.303 & 0.231 & 0.241 & 0.222 & 0.193 & \textbf{0.142} & 0.232 & 0.211 & 0.196 & 0.154 \\
Base & 0.119 & 0.135 & 0.143 & \textbf{0.149} & 0.174 & 0.203 & 0.221 & 0.231 & 0.124 & 0.139 & 0.142 & \textbf{0.142} & 0.122 & 0.141 & 0.141 & 0.154 \\
MinT-sam & 0.106 & 0.135 & 0.139 & 0.152 & 0.167 & 0.191 & 0.214 & 0.227 & 0.106 & 0.125 & 0.133 & 0.156 & 0.106 & 0.119 & 0.141 & \textbf{0.153} \\
MinT-shr & 0.103 & 0.129 & 0.137 & 0.158 & 0.162 & 0.189 & 0.206 & 0.232 & 0.101 & 0.113 & 0.132 & 0.153 & 0.103 & \textbf{0.114} & 0.137 & 0.155 \\
MinT-ols & 0.109 & 0.133 & 0.142 & 0.159 & 0.167 & 0.194 & 0.215 & 0.233 & 0.109 & 0.124 & 0.133 & 0.154 & 0.111 & 0.123 & 0.142 & 0.155 \\
ERM & 0.126 & 0.147 & 0.152 & 0.156 & 0.164 & \textbf{0.178} & \textbf{0.192} & \textbf{0.201} & 0.132 & 0.145 & 0.149 & 0.162 & 0.121 & 0.138 & 0.143 & 0.158 \\
SHARQ & \textbf{0.097} & \textbf{0.124} & \textbf{0.133} & \textbf{0.149} & \textbf{0.157} & 0.187 & 0.199 & 0.231 & \textbf{0.089} & \textbf{0.096} & \textbf{0.126} & \textbf{0.142} & \textbf{0.092} & 0.115 & \textbf{0.136} & 0.154 \\ \hhline{|=|====|====|====|====|}
BU & 0.247 & 0.231 & 0.226 & \textbf{0.208} & 0.397 & 0.375 & 0.316 & 0.297 & 0.219 & 0.211 & 0.194 & \textbf{0.164} & 0.199 & 0.171 & 0.152 & \textbf{0.135} \\
Base & 0.162 & 0.167 & 0.193 & \textbf{0.208} & 0.231 & 0.257 & 0.265 & 0.297 & 0.146 & 0.152 & 0.175 & \textbf{0.164} & 0.079 & 0.128 & 0.136 & \textbf{0.135} \\
MinT-sam & 0.147 & 0.158 & 0.154 & 0.211 & 0.257 & 0.262 & 0.271 & 0.279 & 0.112 & 0.141 & 0.175 & 0.189 & 0.091 & 0.124 & 0.142 & 0.149 \\
MinT-shr & 0.134 & 0.142 & 0.146 & 0.213 & 0.256 & 0.248 & 0.268 & 0.288 & 0.096 & 0.137 & 0.134 & 0.171 & 0.083 & 0.112 & 0.147 & 0.166 \\
MinT-ols & 0.143 & 0.161 & 0.154 & 0.215 & 0.259 & 0.261 & 0.272 & 0.283 & 0.109 & 0.154 & 0.156 & 0.191 & 0.086 & 0.117 & 0.139 & 0.162 \\
ERM & 0.152 & 0.154 & 0.188 & 0.226 & 0.213 & 0.229 & \textbf{0.241} & \textbf{0.267} & 0.124 & 0.166 & 0.168 & 0.194 & 0.098 & 0.129 & 0.151 & 0.172 \\
SHARQ & \textbf{0.071} & \textbf{0.063} & \textbf{0.114} & \textbf{0.208} & \textbf{0.189} & \textbf{0.225} & 0.279 & 0.297 & \textbf{0.069} & \textbf{0.074} & \textbf{0.108} & \textbf{0.164} & \textbf{0.067} & \textbf{0.069} & \textbf{0.096} & \textbf{0.135} \\
\hlinewd{1.5pt}
\end{tabular}\hspace{5em}}
\caption{Performance measured by CRPS \citep{matheson1976scoring} on Australian Labour (755 time series), and M5 competition (42840 time series), lower values are better. Level 1 is the top aggregation level, and 4 is the bottom aggregation level.}
\label{tab:crps}
\end{table*}

\begin{figure}[h]
    \centering{
    \includegraphics[width=.48\textwidth]{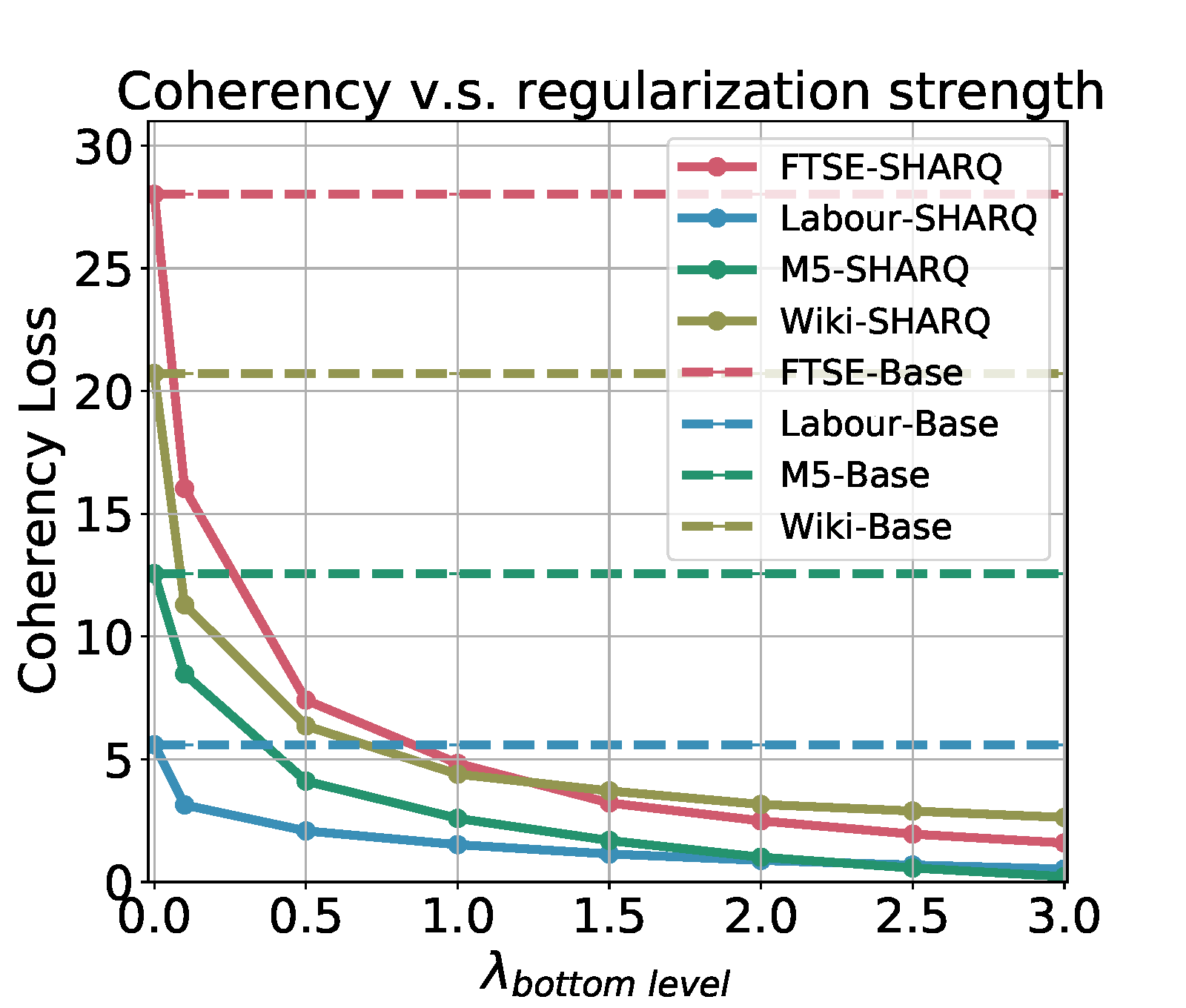}
    \caption{Coherency loss of SHARQ compared with the Base method on four datasets. Results are averaged across all the forecasting models.}}
    \label{fig:coherency}
\end{figure}

We pre-define the hyper-parameter $\lambda_i$ for vertex $v_i$ from Eq.(\ref{contstrained_loss}) level-wise, where we use the same $\lambda$s for all time series at the same level, and gradually decrease this value at higher aggregation levels. This is because time series at the same level possess similar magnitudes. With more vertices at lower levels, the chances of having consistency issues are higher, and error can accumulate to higher levels. 

We then evaluate the effect of regularization strength on forecasting coherency across the hierarchical structure; we summarize the result in Figure \ref{fig:coherency}, where the coherency loss drops dramatically after incorporating the hierarchical regularization at each level. Note that we mainly compare SHARQ with the Base method (SHARQ without regularization), as other reconciliation approaches generate absolute coherent results at the cost of sacrificing forecasting accuracy. More detailed evaluations can be found in Appendix \ref{sec:add_exp}.

\subsection{Comparison with Baseline Methods} 
Learning inter-level relationships through regularization helps SHARQ generalize better while mitigating coherency issues. It also provides a learnable trade-off between coherency and accuracy. From a computational perspective, MinT and ERM require one to compute matrix inversion explicitly. Note that the ERM method could compute the weight matrix on a validation set, but additional matrix computations are required during inference. Crucially, they depend on the Gaussian and unbiasedness assumptions, as stated in \citet{ben2019regularized, hyndman2011optimal, wickramasuriya2015forecasting} and their performance degrades noticeably when faced with actual data that do not match these assumptions well.

\section{Conclusion} \label{conclusion}
This paper has proposed a distributed optimization framework to generate probabilistic forecasts for a set of time series subject to hierarchical constraints. In our approach, the forecasting model is trained in a bottom-up fashion. At any stage, the model training involves simultaneously updating model parameters at two adjacent levels while maintaining the coherency constraints. This enables manageable information exchange at different levels of data aggregation. Our framework can incorporate any forecasting model and the non-parametric quantile loss function to generate accurate and coherent forecasts with pre-specified confidence levels. We have analytically demonstrated that by training the model with our modified objective function, the variance of time series data at higher aggregation levels can be reduced. We also compared our method empirically with the state-of-the-art hierarchical forecasting methods with cutting-edge base forecasters. The results show that our method produces relatively robust with accurate and coherent forecasts. Our proposed method reduces the inference complexity compared to the state-of-the-art algorithms, which perform a computationally expensive matrix inversion operation during the inference to achieve the reconciliation. 

As for future work, we plan to extend our method to multi-variate time series to be forecast at different time granularities while obeying hierarchical relationships. Besides, we also plan to investigate incorporating exogenous variables and related metadata. 

\textbf{Acknowledgements} \\
This work is supported by Intuit Inc. The authors would like to thank all reviewers for their constructive feedback and Tongzheng Ren for helpful discussion.
\bibliography{references_dis_opt}
\renewcommand{\L}{\mathcal{L}}
\renewcommand{\hy}{\hat{y}}
\renewcommand{\hb}{\hat{\beta}}
\renewcommand{\ty}{\Tilde{y}}
\renewcommand{\hY}{\hat{Y}}
\renewcommand{\tY}{\Tilde{Y}}
\renewcommand{\L}{\mathcal{L}}
\renewcommand{\TL}{\widetilde{\mathcal{L}}_n}
\renewcommand{\S}{\mathcal{S}}
\renewcommand{\I}{\mathcal{I}}


%





%


\newpage

\appendix
\begin{center}
\huge \textbf{Appendix}
\end{center}

\section{Further Discussion on Related Works} \label{sec:background}
As we mentioned in Section (\ref{sec:intro}), state-of-the-art hierarchical forecasting algorithms \citep{ben2019regularized, hyndman2011optimal, hyndman2016fast, wickramasuriya2015forecasting} involves computing the optimal $P$ matrix to combine the base forecasts under different situations linearly. We now summarize these methods as follows.

\subsection{Generalized Least Squares (GLS) Reconciliation} 
Denote $b_t \in \mathbb{R}^m$, $a_t \in \mathbb{R}^k$ as the observations at time $t$ for the $m$ and $k$ series at the bottom and aggregation level(s), respectively. $S \in \{0, 1\}^{n \times m}$ is the summing matrix. Each entry $S_{ij}$ equals to 1 if the $i^{th}$ aggregate series contains the $j^{th}$ bottom-level series, where $i = 1, ..., k$ and $j = 1, ..., m$. Denote $\I_T = \{y_1, y_2, ..., y_T\}$ as the time series data observed up to time $T$; $\hat{b}_T(h)$ and $\hat{y}_T(h)$ as the $h-$step ahead forecast on the bottom-level and all levels based on $\I_T$.

Let $\hat{e}_T(h) = y_{T+h} - \hy_T(h)$ be the $h-$step ahead conditional base forecast errors and $\beta_T(h) = E[\hat{b}_T(h)~|~\I_T]$ be the bottom-level mean forecasts. We then have $E[\hy_T(h)~|~\I_T] = S\beta_T (h)$. Assume that $E[\hat{e}_T(h)~|~\I_T] = 0$, then a set of reconciled forecasts will be unbiased iff $SPS = S$, i.e., \textbf{Assumption A1:}
\begin{equation}
\E [\ty_T (h) | \I_T] = \E [\hy_T (h) | \I_T] = S \beta_T(h) \quad
\end{equation}
The optimal combination approach proposed by \citet{hyndman2011optimal}, is based on solving the above regression problem using the generalized least square method:
\begin{equation}
    \hy_T(h) = S\beta_T(h) + \varepsilon_h,
\end{equation}
where $\varepsilon_h$ is the independent coherency error with zero mean and $\mathrm{Var}(\varepsilon_h) = \Sigma_h$. The GLS estimator of $\beta_T(h)$ is given by
\begin{equation}
    \hat{\beta}_T(h) = (S'\Sigma'_h S)^{-1} S' \Sigma_h' \hat{y}_T(h),
    \label{equ:gls}
\end{equation}
which is an unbiased, minimum variance  estimator. The optimal $P$ is $(S'\Sigma'_h S)^{-1} S' \Sigma_h'$. The reconciled mean and variance can therefore be obtained accordingly.

\subsection{Trace Minimization (MinT) Reconciliation}
Defining the reconciliation error as $\Tilde{e}_T(h) = y_{T+h} - \ty_T(h)$, the original problem can also be formulated as 
\begin{align}
    & \underset{P \in \mathcal{P}}{\min} ~ \E[\|\Tilde{e}_T(h)\|_2^2~|~\I_T] \notag \\
    & \mathrm{subject ~ to} ~ \E[\ty_T(h) ~|~ \I_T] = \E[\ty_T(h) ~|~ \I_T]
    \label{equ:formulation}
\end{align}
If the assumption \textbf{A1} still holds, then minimizing Eq.(\ref{equ:formulation}) reduces to 
\begin{equation}
    \underset{P \in \mathcal{P}}{\min} ~ \mathrm{Tr}(\mathrm{Var}[\Tilde{e}_T(h)~|~\I_T]) \quad \mathrm{subject ~ to ~ \textbf{A1}},
\end{equation}
where Tr(.) denotes the trace of a matrix. In \citet{wickramasuriya2019optimal}, the proposed optimal solution of $P$ obtained by solving this problem is given by 
\begin{equation}
    P = (S'W_h^{-1}S)^{-1}S'W_h^{-1},
    \label{mint}
\end{equation}
where $W_h = \E[\hat{e}_T(h)\hat{e}_T'(h) ~|~ \I_T]$ is the variance-covariance matrix of the $h-$step-ahead base forecast errors, which is different from the coherence errors $\Sigma_h$ in GLS reconciliation method given in Eq.(\ref{equ:gls}). There are various covariance estimators for $W_h$ considered in \citet{wickramasuriya2019optimal}, the most effective one is the shrinkage estimator with diagonal target, and can be computed by
\begin{equation}
    \hat{W}_h = (1 - \alpha) \hat{W}_s + \alpha \hat{W}_d, \quad \hat{W}_s = \frac{1}{T}\sum_{t=1}^T \hat{e}_t(1)\hat{e}_t(1)',
    \label{mint_est}
\end{equation}
where $\hat{W}_d = \mathrm{diag}(\hat{W}_s)$ and $\alpha \in (0, 1]$.

\subsection{Empirical Risk Minimization (ERM) Reconciliation}
Most recently, \citet{ben2019regularized} proposed a new method to relax the unbiasedness assumption \textbf{A1}. Specifically, the objective function in (\ref{equ:formulation}) can be decomposed as 
\begin{align}
    & \E[\|y_{T+h} - \ty_T(h)\|_2^2~|~\I_T] \label{equ:mse} \\
    & = \|SP(\E[\hy_T(h) | \I_T] - \E[y_{T+h}|\I_T]) \nonumber \\ 
    & + (S - SPS)\E[b_{T+h}|\I_T]\|_2^2 \label{equ:bias}\\
    & + \mathrm{Tr} (\mathrm{Var}[y_{T+h} - \ty_T(h)|\I_T] \label{equ:var},
\end{align}
where (\ref{equ:bias}) and (\ref{equ:var}) are the bias and variance terms of the revised forecasts $\ty_T(h)$. The assumption \textbf{A1} in MinT method renders (\ref{equ:bias}) to 0. Obviously, directly minimize the objective in (\ref{equ:mse}) provides a more general form of reconciliation represented by  following empirical risk minimization (ERM) problem:
\begin{equation}
    \underset{P \in \mathcal{P}}{\min} \frac{1}{(T - T_1 - h + 1)n}\sum_{t=T_1}^{T-h} \|y_{t+h} - SP \hy_t(h)\|_2^2,
    \label{equ:erm}
\end{equation}
where $T_1 < T$ is the number of observations used for model fitting. Empirically, this method demonstrates better performance than MinT according to \citet{ben2019regularized}, particularly when the forecasting models are mis-specified.

\section{Non-Additive Property of Quantile Loss} \label{sec:nonadd}
Here we prove the non-additive property of quantile loss as mentioned in Section (2.2).
\begin{thm}\textbf{(Non-additive Property)}
Assume two independent random variables $X_1 \sim N(\mu_1, \sigma_1^2)$ and $X_2 \sim N(\mu_2, \sigma_2^2)$, and define $Y = X_1 + X_2$. Then $Q_{Y}(\tau) \neq Q_{X_1}(\tau) + Q_{X_2}(\tau)$.
\end{thm}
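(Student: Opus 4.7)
The plan is to exploit the closed-form expression for quantiles of a normal distribution. For any $X \sim N(\mu, \sigma^2)$ the $\tau$-quantile admits the representation $Q_X(\tau) = \mu + \sigma \Phi^{-1}(\tau)$, where $\Phi$ is the CDF of the standard normal. Because $X_1$ and $X_2$ are independent Gaussians, $Y = X_1 + X_2 \sim N(\mu_1 + \mu_2, \sigma_1^2 + \sigma_2^2)$, so I can write both sides of the proposed identity in closed form.

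First I would compute
\begin{equation*}
Q_Y(\tau) = (\mu_1 + \mu_2) + \sqrt{\sigma_1^2 + \sigma_2^2}\,\Phi^{-1}(\tau),
\end{equation*}
and compare to
\begin{equation*}
Q_{X_1}(\tau) + Q_{X_2}(\tau) = (\mu_1 + \mu_2) + (\sigma_1 + \sigma_2)\,\Phi^{-1}(\tau).
\end{equation*}
Subtracting yields the clean gap
\begin{equation*}
Q_{X_1}(\tau) + Q_{X_2}(\tau) - Q_Y(\tau) = \bigl(\sigma_1 + \sigma_2 - \sqrt{\sigma_1^2 + \sigma_2^2}\bigr)\Phi^{-1}(\tau).
\end{equation*}
Whenever $\sigma_1, \sigma_2 > 0$, the inequality $(\sigma_1 + \sigma_2)^2 = \sigma_1^2 + \sigma_2^2 + 2\sigma_1\sigma_2 > \sigma_1^2 + \sigma_2^2$ shows that the scalar coefficient is strictly positive, so the gap is nonzero for every $\tau$ with $\Phi^{-1}(\tau)\neq 0$. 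This proves non-additivity in the generic regime.

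The main subtlety—and the step I would handle most carefully—is the boundary case $\tau = 1/2$. There $\Phi^{-1}(1/2)=0$, so equality actually does hold trivially by symmetry of the Gaussian, with both sides equal to $\mu_1 + \mu_2$. I would therefore phrase the theorem as "for any $\tau \neq 1/2$," or equivalently "there is no single $\tau$-reconciliation rule that works for all quantiles." I would also briefly reconcile the claim in the main text that equality holds when $X_1 = C X_2$: under independence with positive variances this is impossible, so the case corresponds to dropping independence and taking perfectly (co)monotone Gaussians; then $\sqrt{\sigma_1^2 + \sigma_2^2}$ is replaced by $|C\sigma_2 + \sigma_2|$, recovering additivity by comonotonicity, which is the standard characterization of quantile additivity.

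Finally, as a companion observation that motivates the regularizer in equation~(\ref{quantile_recon}), I would verify the centered-squared identity $(Q_Y^\tau - \mu_Y)^2 = (Q_{X_1}^\tau - \mu_{X_1})^2 + (Q_{X_2}^\tau - \mu_{X_2})^2$ by plugging in $Q_X(\tau) - \mu = \sigma \Phi^{-1}(\tau)$, which reduces the identity to $\sigma_1^2 + \sigma_2^2 = \sigma_1^2 + \sigma_2^2$ after factoring out $\Phi^{-1}(\tau)^2$. This shows that although quantiles themselves do not add, their squared deviations from the mean do, which is exactly what the paper's reconciliation objective exploits.
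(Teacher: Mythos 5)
Your proof is correct and follows essentially the same route as the paper's: write $Q_X(\tau)=\mu+\sigma\Phi^{-1}(\tau)$ (the paper uses the equivalent $\mathrm{erf}^{-1}$ form), use Gaussian additivity to get $Y\sim N(\mu_1+\mu_2,\sigma_1^2+\sigma_2^2)$, and observe that $\sqrt{\sigma_1^2+\sigma_2^2}\neq\sigma_1+\sigma_2$, followed by the same verification of the centered-squared identity. Your explicit handling of the $\tau=1/2$ case is a genuine (if minor) refinement the paper omits, since the stated inequality does fail there.
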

\begin{proof}
The $\tau^{th}$ quantile of $X_1$ is given by: 
\begin{equation}
Q_{X_1}(\tau) = F_{X_1}^{-1}(\tau) = inf \{x: F_{X_1}(x) \geq \tau\},
\end{equation}

where $F_{X_1} (x)$ is $\frac{1}{2} \left[1 + erf\left(\frac{x - \mu_1}{\sigma_1 \sqrt{2}}\right)\right]$, and $erf(x) = \frac{1}{\sqrt{\pi}} \int_{-x}^{x} e^{-t^2} dt$. Therefore, we can further get:

\begin{align*}
    Q_{X_1}(\tau) & = \mu_1 + \sigma_1 \Phi^{-1} (\tau) = \mu_1 + \sigma_1 \sqrt{2} ~ erf^{-1} (2\tau - 1) \\
    Q_{X_2}(\tau) & = \mu_2 + \sigma_2 \Phi^{-1} (\tau) = \mu_2 + \sigma_2 \sqrt{2} ~ erf^{-1} (2\tau - 1) \\
\end{align*}
According to the additive property of Gaussian distribution, we have $Y \sim N(\mu_1 + \mu_2, \sigma_1^2 + \sigma_2^2)$, and
\begin{align}
     Q_{Y}(\tau) & = \mu_1 + \mu_2 + \sqrt{\sigma_1^2 + \sigma_2^2} ~ \Phi^{-1} (\tau) \nonumber \\
     & = \mu_1 + \mu_2 + \sqrt{\sigma_1^2 + \sigma_2^2} ~ \sqrt{2} ~ erf^{-1} (2\tau - 1).
     \label{quantile}
\end{align}

Therefore, even if we have \texttt{i.i.d.}normal distribution with $Y = X_1 + X_2$, it still doesn't imply $Q_{Y}(\tau) = Q_{X_1}(\tau) + Q_{X_2}(\tau)$. The only case that the addition property hold true in any quantile is when $X_1 = C \times X_2$, where $C$ is arbitrary constant. Obviously, this is not applicable. 
\end{proof}
In fact, under Gaussian assumption, we have the following additive property holds for any $\tau$:

\begin{equation}
    (Q_{Y}^{\tau} - \mu_Y )^2 = (Q_{X_1}^{\tau} - \mu_{X_1})^2 + (Q_{X_2}^{\tau} - \mu_{X_2})^2.
    \label{add_quant}
\end{equation}

Since by Eq.(\ref{quantile}), the left hand side of Eq.(\ref{add_quant}) is $2 (\sigma_1^2 + \sigma_2^2) ~ \left[erf^{-1} (2\tau - 1)\right]^2,$ and the right hand side of Eq.(\ref{add_quant}) is $2\sigma_1^2 \left[erf^{-1} (2\tau - 1)\right]^2 + 2\sigma_2^2 \left[erf^{-1} (2\tau - 1)\right]^2.$ Therefore, the additive property holds for any $\tau$ assume the RVs follow Gaussian distribution.

\begin{table*}[t]
\centering
\caption{\centering{Details of four hierarchical time-series datasets. Note that hierarchical levels mean the number of aggregation levels from bottom to top in the hierarchical structure used in the experiments.}}
\vspace{.5em}
\scalebox{1.05}{
\begin{tabular}{c|c|c|c}
\hlinewd{1.5pt}
Dataset & Total number of time series & Total length of time series & Hierarchical Levels \\ \hline
FTSE & 73 & 2512 & 4 \\
M5 & 42840 & 1969 & 4 \\
Wiki & 145000 & 550 & 5 \\
Labour & 755 & 500 & 4 \\
\hlinewd{1.5pt}
\end{tabular}}
\label{data_table}
\end{table*}

\begin{figure*}[t]
\centering
{
\setlength{\tabcolsep}{1pt} 
\renewcommand{\arraystretch}{1} 
\begin{tabu}{cccc}
\hspace{-2.05em}
\includegraphics[width=.25\textwidth]{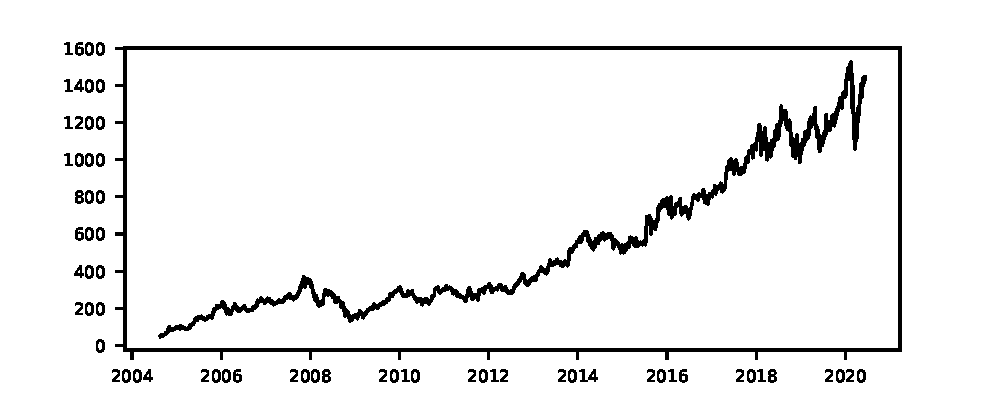}&
\hspace{-1.55em}
\includegraphics[width=.25\textwidth]{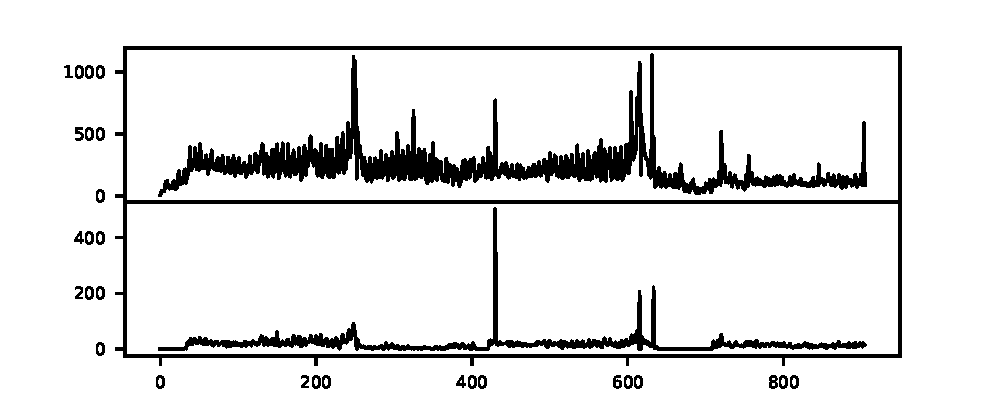}&
\hspace{-1.55em}
\includegraphics[width=.25\textwidth]{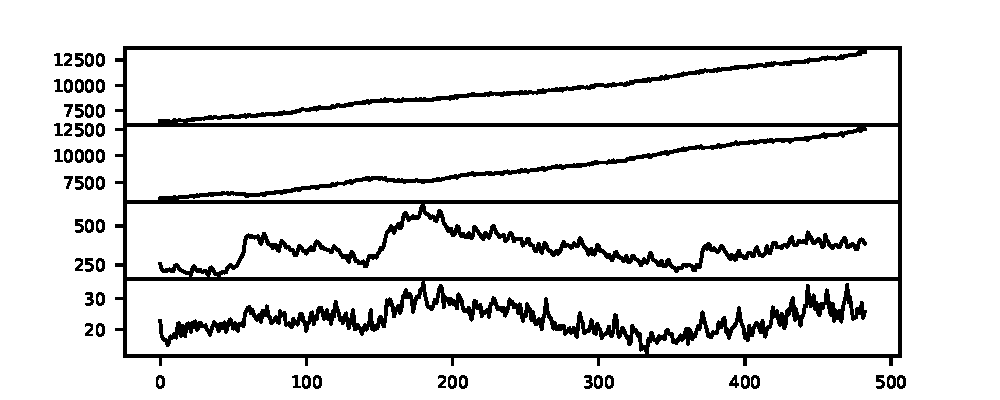}&
\hspace{-1.55em}
\includegraphics[width=.25\textwidth]{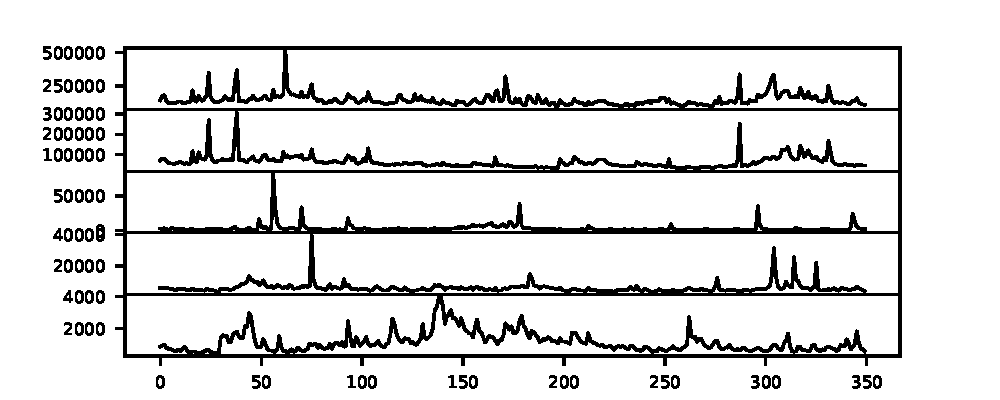}\\
\hspace{-2.45em} (a) FTSE & \hspace{-2.05em} (b) M5 & \hspace{-2.05em} (c) Labour & \hspace{-2.05em} (d) Wikipedia\\
\end{tabu}}
\caption{Visualization of hierarchical time series data. (a) Bottom level time series of FTSE (the open stock price of Google); (b) bottom and top level of unit sales record; (c) Australian Labour Force data at all aggregation levels; (d) Wikipedia page views data at all aggregation levels.}
\label{fig:raw_ts_plot}
\end{figure*}

\section{KKT Conditions} \label{kkt}
An alternative way of solving the optimization problem defined in Section (\ref{problem_formulation}) Eq.(\ref{contstrained_loss}) is to obtain the KKT conditions \citep{boyd2004convex}. For notational simplicity, we express the constrained loss for $i^{th}$ vertex and $m^{th}$ data point as $L_c (i, m)$. As the optimization problem is unconstrained, the KKT conditions will lead to:
\[ \frac{\partial }{ \partial [\theta_i, \Theta_i]} L_c (i, m) = [ ~ \frac{\partial }{ \partial \theta_i} L_c (i, m) ~,~ \frac{\partial }{ \partial \Theta_i} L_c (i, m) ~ ] = 0 , \]
which will further imply that
\begin{align*} 
& \lambda_i   \left [\frac{\partial }{\partial g_i}  L (g_i (X_m^i, \theta_i), ~ Y_m^i)  \right ]^{\top} \nonumber \\ 
& \left [ g_i (X_m^i, \theta_i) - \sum_{e_{i, k} \in E}  e_{i, k} ~ . ~g_k (X_m^k, \theta_k) \right ]\nonumber \\
& + \frac{\partial }{\partial g_i}  L (g_i (X_m^i, \theta_i), ~ Y_m^i)~ . ~ \frac{\partial g_i}{\partial \theta_i} = 0,
\end{align*}
and
\begin{align*}  
&\left ( e_{i, j} ~ . ~g_{j} (X_m^{j}, \theta_{j})  \right )^ T \nonumber \\
& \left ( g_i (X_m^i, \theta_i) - \sum_{e_{i, k} \in E}  e_{i, k} g_k (X_m^k, \theta_k) \right ) = 0, ~ \forall j | e_{i,j} \in E.
\end{align*}
However, we found that SHARQ performs better and more efficiently than the KKT approach during our empirical evaluation. Solving the KKT conditions requires matrix inversion in most situations. Besides, SHARQ is more flexible in incorporating various forecasting models and performs probabilistic forecasts.

\section{Dataset Details} \label{sec:data}
We first describe the details (dataset generation, processing, etc.) of each dataset used in the experiment. A summary of each dataset is shown in Table \ref{data_table}. Visualizations for some raw time series can be found in Figure \ref{fig:raw_ts_plot}.

\subsection{FTSE Stock Market Data}
The FTSE Global Classification System is a universally accepted classification scheme based on a market's division into Economic Groups, Industrial Sectors, and Industrial Sub-sectors. This system has been used to classify company data for over 30,000 companies from 59 countries. The FTSE 100 \citep{doherty2005hierarchical} is the top 100 capitalized blue-chip companies in the UK and is recognized as the measure of UK stock market performance \citep{russell2017ftse}. Base on the FTSE classification system, we formulate a 4-level hierarchical structure (Economic Groups, Industrial Sectors, Industrial Sub-sectors, and companies) of 73 companies in \citet{doherty2005hierarchical}. Our task is to model the stock market time series for each company. The stock market data of each company is available from the Yahoo Finance package\footnote[1]{https://pypi.org/project/yfinance/}. Since the stock market time series starting time of each company is not the same, we use a common time window ranging from January 4, 2010, to May 1, 2020.

\begin{figure*}[t]
    \centering
    \includegraphics[width=\textwidth]{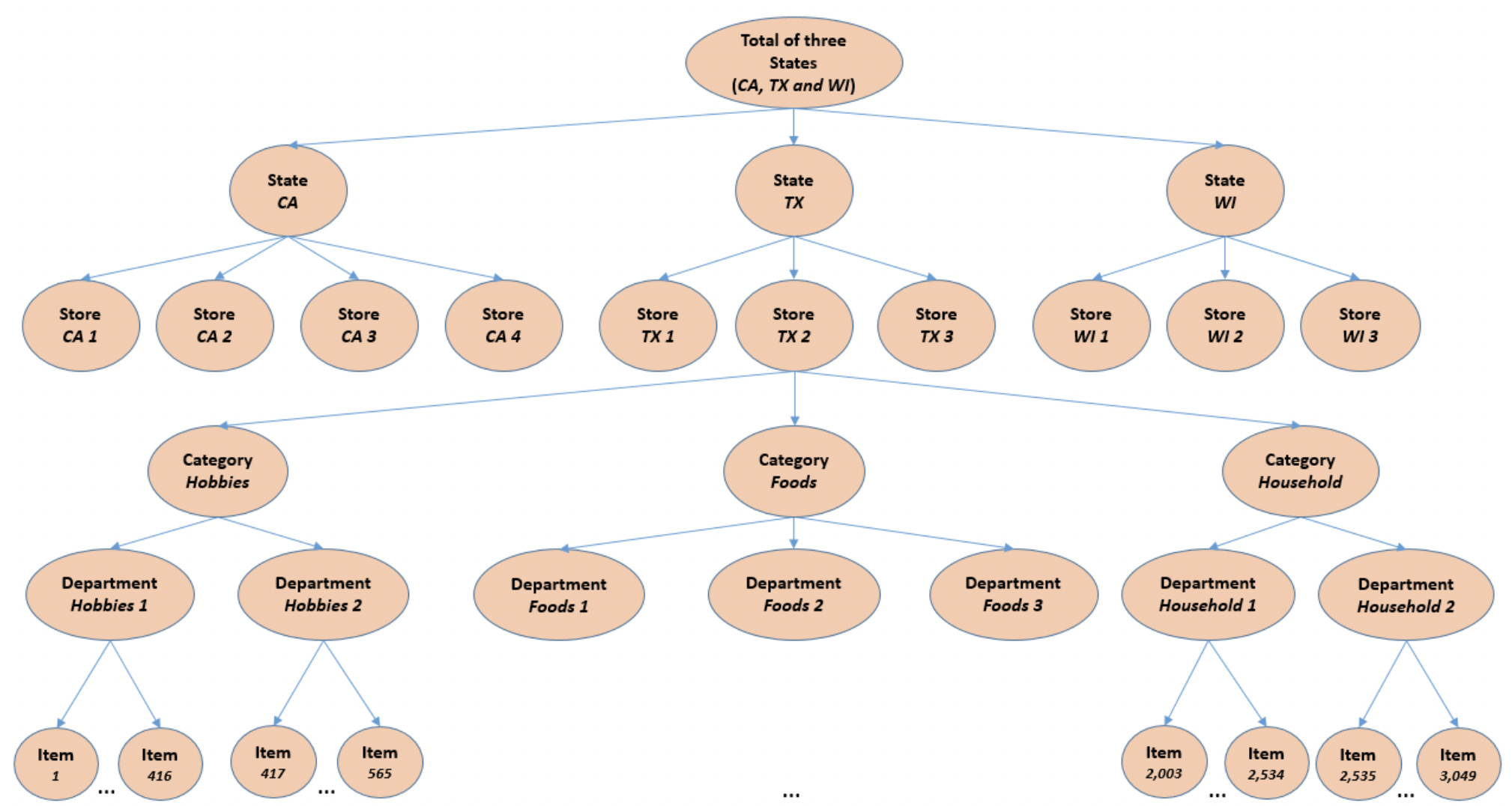}
    \caption{Hierarchical structure of the M5 dataset.}
    \label{fig:m5}
\end{figure*}

\subsection{M5 Competition Data}
The M5 dataset\footnote[2]{https://mofc.unic.ac.cy/wp-content/uploads/2020/03/M5-Competitors-Guide-Final-10-March-2020.docx} involves the unit sales of various products ranging from January 2011 to June 2016 in Walmart. It involves the unit sales of 3,049 products, classified into 3 product categories (Hobbies, Foods, and Household) and 7 product departments, where the categories mentioned above are disaggregated. The products are sold across ten stores in three states (CA, TX, and WI). An overview of how the M5 series are organized is shown in Figure \ref{fig:m5}. Here, we formulate a 4-level hierarchy, starting from the bottom-level individual item to unit sales of all products aggregated for each store.

\subsection{Wikipedia Webpage Views}
This dataset\footnote[3]{https://www.kaggle.com/c/web-traffic-time-series-forecasting} contains the number of daily views of 145k various Wikipedia articles ranging from July 2015 to Dec. 2016. We follow the data processing approach used in \citet{ben2019regularized} to sample 150 bottom-level series from the 145k series and aggregate to obtain the upper-level series. The aggregation features include the type of agent, type of access, and country codes. We then obtain a 5-level hierarchical structure with 150 bottom series.

\subsection{Australian Labour Force}
This dataset\footnote[4]{https://www.abs.gov.au/AUSSTATS/abs@.nsf\\/DetailsPage/6202.0Dec\%202019?OpenDocument} contains monthly employment information ranging from Feb. 1978 to Aug. 2019 with 500 records for each series. The original dataset provides a detailed hierarchical classification of labor force data, while we choose three aggregation features to formulate a 4-level symmetric structure. Specifically, the 32 bottom level series are hierarchically aggregated using labor force location, gender, and employment status. 

\begin{table*}[t!]
\centering
\caption{\centering{$\mathrm{MAPE}^{\downarrow}$ for small and large simulation dataset. The likelihood ratios are given in parentheses.}}
\hspace{-.85em}
\scalebox{1.0}{
\begin{tabular}{c|ccc|cccc}
\hlinewd{1.5pt}
\multirow{2}{*}{MAPE} & \multicolumn{3}{c}{ Simulation Small } & \multicolumn{4}{c}{ Simulation Large } \\
\hhline{|~|---|----|}
& Top level & Level 1 & Level 2 & Top level & Level 1 & Level 2 & Level 3\\
\hline
Base & 1.29 (.69) & 1.50 (.77) & 2.41 (.91) & 2.08 (.43) & 2.20 (.61) & 1.41 (.75) & 0.72 (.85) \\
BU & 2.14 (.73) & 1.76 (.79) & 2.41 (.91) & 4.19 (.46) & 3.48 (.64) & 1.48 (.76) & 0.72 (.85) \\
MinT-sam & 0.54 (.66) & 1.48 (.77) & 2.24 (.89) & 1.48 (.42) & 2.55 (.65) & 1.38 (.74) & 0.63 (.83) \\
MinT-shr & 0.45 (.65) & 1.47 (.77) & \textbf{2.23} (.89) & \textbf{1.28} (.39) & 2.31 (.63) & \textbf{1.35} (.74) & \textbf{0.59} (.81) \\
MinT-ols & \textbf{0.20} (.64) & 1.72 (.78) & 2.41 (.91) & 1.69 (.41) & 2.15 (.60) & 1.41 (.75) & 0.71 (.85) \\
ERM & 1.23 (.69) & 1.73 (.78) & 2.55 (.93) & 2.78 (.44) & 2.86 (.69) & 1.50 (.76) & 0.75 (.86) \\
SHARQ & 1.54 (.41) & \textbf{1.42} (.45) & 2.41 (.73) & 2.16 (.23) & \textbf{2.13} (.49) & 1.44 (.67) & 0.72 (.82) \\
\hlinewd{1.5pt}
\end{tabular}}
\label{sim1_result}
\end{table*}

\begin{table*}[t!]
\centering
\renewcommand\arraystretch{1.2}
\caption{\centering{$\mathrm{MAPE}^{\downarrow}$ on FTSE dataset. Level 1 is the top aggregation level; 4 is the bottom aggregation level.}}
\scalebox{0.76}{
\begin{tabular}{c|cccc|cccc|cccc|cccc}
\hlinewd{1.5pt}
Algorithm & \multicolumn{4}{c}{ RNN } & \multicolumn{4}{c}{ Autoregressive } & \multicolumn{4}{c}{ LST-Skip } & \multicolumn{4}{c}{ N-Beats } \\
\hline
\multirow{2}{*}{ Reconciliation } & \multicolumn{4}{c}{ Level } & \multicolumn{4}{c}{ Level } & \multicolumn{4}{c}{ Level } & \multicolumn{4}{c}{ Level } \\ \hhline{|~|----|----|----|----|}
& 1 & 2 & 3 & 4 & 1 & 2 & 3 & 4 & 1 & 2 & 3 & 4 & 1 & 2 & 3 & 4 \\ \hline
BU & 6.11 & 8.48 & 9.41 & 9.54 & 10.01 & 12.15 & 11.77 & 12.43 & 7.48 & 8.96 & 9.29 & 9.49 & 6.63 & 8.04 & 8.23 & 8.41 \\
Base & 4.82 & 6.27 & 8.55 & 9.54 & \textbf{8.65} & 10.46 & 10.88 & 12.43 & 6.02 & 7.79 & 8.76 & 9.49 & 5.86 & \textbf{7.56} & \textbf{8.01} & 8.41 \\
MinT-sam & 4.68 & 8.53 & 8.77 & 10.13 & 9.72 & 11.25 & 11.57 & 12.26 & 6.47 & 8.24 & 8.93 & 10.62 & 5.94 & 7.89 & 8.35 & 8.86 \\
MinT-shr & \textbf{4.43} & 8.46 & 8.59 & 9.75 & 9.23 & 10.91 & 11.02 & \textbf{12.13} & 6.12 & 8.11 & 8.81 & 10.57 & 5.67 & 7.74 & 8.22 & \textbf{8.54} \\
MinT-ols & 4.71 & 8.92 & 8.74 & 10.31 & 9.96 & 11.01 & 11.25 & 12.32 & 6.31 & 8.56 & 8.74 & 10.88 & 5.87 & 8.12 & 8.41 & 9.84 \\
ERM & 5.74 & 9.52 & 9.54 & 12.41 & 9.92 & 10.61 & 12.03 & 13.23 & 8.12 & 9.38 & 9.76 & 13.01 & 6.19 & 8.89 & 9.26 & 10.22 \\
SHARQ & 4.51 & \textbf{8.28} & \textbf{8.08} & \textbf{9.54} & 9.13 & \textbf{9.35} & \textbf{10.61} & 12.43 & \textbf{5.01} & \textbf{7.14} & \textbf{8.52} & \textbf{9.49} & \textbf{5.44} & 7.83 & 7.93 & 8.41 \\
\hlinewd{1.5pt}
\end{tabular}\hspace{5em}}
\label{tab:mape_ftse}
\end{table*}

\begin{table*}[t!]
\centering
\renewcommand\arraystretch{1.4}
\caption{\centering{$\mathrm{MAPE}^{\downarrow}$ on Wiki dataset. Level 1 is the top aggregation level; 5 is the bottom aggregation level.}}
\scalebox{0.56}{
\begin{tabular}{c|ccccc|ccccc|ccccc|ccccc}
\hlinewd{1.5pt}
Algorithm & \multicolumn{5}{c}{ RNN } & \multicolumn{5}{c}{ Autoregressive } & \multicolumn{5}{c}{ LST-Skip } & \multicolumn{5}{c}{ N-Beats } \\
\hline
\multirow{2}{*}{ Reconciliation } & \multicolumn{5}{c}{ Level } & \multicolumn{5}{c}{ Level } & \multicolumn{5}{c}{ Level } & \multicolumn{5}{c}{ Level } \\ \hhline{|~|-----|-----|-----|-----|}
& 1 & 2 & 3 & 4 & 5 & 1 & 2 & 3 & 4 & 5 & 1 & 2 & 3 & 4 & 5 & 1 & 2 & 3 & 4 & 5 \\ \hline
BU & 11.71 & 12.36 & 14.47 & 16.45 & 16.74 & 15.67 & 15.99 & 16.67 & 18.99 & 20.32 & 11.44 & 11.88 & 13.31 & 14.76 & 15.77 & 11.92 & 12.57 & 14.45 & 15.22 & 16.21 \\
Base & 11.12 & 11.52 & 14.06 & 16.11 & 16.74 & \textbf{15.04} & 15.23 & 16.02 & 17.83 & 20.32 & 11.21 & 11.24 & 12.88 & 14.35 & 15.77 & 11.84 & 12.02 & 14.17 & 15.16 & 16.21 \\
MinT-sam & 11.65 & 12.02 & 14.19 & 16.23 & 17.66 & 15.22 & 15.65 & 16.33 & 18.12 & 19.87 & 11.38 & 11.46 & 13.13 & 14.57 & 16.22 & 11.96 & 12.26 & 14.29 & 15.25 & 16.45 \\
MinT-shr & 11.32 & 11.86 & 13.87 & \textbf{16.07} & 17.54 & 15.17 & 15.12 & 15.98 & 17.69 & \textbf{19.54} & 11.24 & 11.15 & 12.91 & \textbf{14.32} & 16.14 & 11.75 & 12.19 & 14.03 & 15.02 & 16.39 \\
MinT-ols & 11.48 & 12.11 & 14.52 & 16.34 & 17.59 & 15.37 & 15.74 & 16.23 & 18.01 & 20.21 & 11.42 & 11.52 & 13.05 & 14.78 & 16.59 & 11.88 & 12.39 & 14.21 & 15.16 & 16.45 \\
ERM & 12.08 & 13.62 & 15.96 & 18.11 & 18.97 & 15.29 & 15.85 & 16.12 & \textbf{17.58} & 21.56 & 12.08 & 12.85 & 14.56 & 15.96 & 17.42 & 12.14 & 12.83 & 15.49 & 16.17 & 17.41 \\
SHARQ & \textbf{10.84} & \textbf{11.07} & \textbf{13.54} & 16.08 & \textbf{16.74} & 15.07 & \textbf{15.05} & \textbf{15.87} & 17.79 & 20.32 & \textbf{11.07} & \textbf{11.09} & \textbf{12.65} & 14.41 & \textbf{15.77} & \textbf{11.64} & \textbf{11.67} & \textbf{13.81} & \textbf{15.02} & \textbf{16.21} \\
\hlinewd{1.5pt}
\end{tabular}\hspace{5em}}
\label{tab:mape_wiki}
\end{table*}

\section{Additional Experiments} \label{sec:add_exp}
In this section, we demonstrate our additional experiment results, including the full results on FTSE and Wiki as well as additional simulation experiments under unbiasedness and Gaussian assumptions. Reconciliation error is also measured for each method. We start by discussing our evaluation metrics.

\subsection{Evaluation Metrics}
We denote $\hat{Y}_T (h)$ and $Y_T (h)$ as the $h-$step ahead forecast at time $T$ and its ground truth, respectively. To construct confidence intervals, we use the $95^{th}, 50^{th},$ and $5^{th}$ quantiles as upper, median and lower forecasts.

\begin{table*}[t!]
\centering
\caption{\centering{Average forecasting coherency on each dataset across 4 forecasting models. Bottom-level $\lambda = 3.0$, higher-level $\lambda$s are decreased gradually.}}
\hspace{-.65em}
\scalebox{1.0}{
\begin{tabular}{c|cccc}
\hlinewd{1.5pt}
\multirow{2}{*}{ Reconciliation } & \multicolumn{4}{c}{ Dataset } \\
\hhline{|~|----|}
& FTSE & Labour & M5 & Wiki \\
\hline
Base & 28.01 & 5.59 & 12.56 & 20.71 \\
BU & 0 & 0 & 0 & 0 \\
MINTsam & 4.21E-15 & 4.60E-12 & 0 & 5.46E-10 \\
MINTshr & 2.50E-15 & 4.19E-12 & 0 & 6.40E-11 \\
MINTols & 6.22E-15 & 6.10E-12 & 0 & 1.08E-10 \\
ERM & 6.48E-12 & 2.27E-08 & 5.86E-12 & 2.40E-07 \\
SHARQ & 1.59 & 0.53 & 0.22 & 2.63 \\
\hlinewd{1.5pt}
\end{tabular}}
\label{recon}
\end{table*}

\begin{table*}[t!]
\centering
\caption{\centering{Training and inference time (in second) comparison for each data set.}}
\scalebox{.9}{
\begin{tabular}{c|cc|cc|cc|cc}
\hlinewd{1.5pt}
\multirow{2}{*}{Time (s)} & \multicolumn{2}{c}{ FTSE } & \multicolumn{2}{c}{ Labour } & \multicolumn{2}{c}{ M5 } & \multicolumn{2}{c}{ Wikipedia } \\
\hhline{|~|--|--|--|--|}
& training & inference & training & inference & training & inference & training & inference \\
\hline
Base & 115.96 & 0.01 & 68.35 & 0.00 & 181.58 & 0.00 & 205.47 & 0.01 \\
BU & 65.83 & 0.03 & 57.06 & 0.00 & 105.45 & 0.00 & 142.53 & 0.01 \\
MinT-sam & 106.55 & 1,784.77 & 72.24 & 430.42 & 172.11 & 1,461.81 & 208.26 & 1,106.70 \\
MinT-shr & 104.35 & 1,148.49 & 60.83 & 317.02 & 175.83 & 1,039.53 & 198.16 & 788.31 \\
MinT-ols & 103.23 & 1,129.45 & 64.14 & 310.13 & 163.24 & 977.88 & 196.88 & 702.02 \\
ERM & 547.66 & 0.05 & 497.88 & 0.01 & 551.60 & 0.01 & 1,299.30 & 0.04 \\
SHARQ & 121.84 & 0.01 & 99.96 & 0.00 & 201.40 & 0.00 & 241.97 & 0.01 \\
\hlinewd{1.5pt}
\end{tabular}}
\label{time_comparison}
\end{table*}

\subsubsection{Mean Absolute Percentage Error (MAPE)}
The MAPE is commonly used to evaluate forecasting performance. It is defined by
\begin{equation}
    \mathrm{MAPE} = \frac{100}{H} \sum_{h=1}^H \frac{|Y_T (h) - \hat{Y}_T (h)|}{|Y_T (h)|}.
\end{equation}
\subsubsection{Likelihood Ratio}
We compute the likelihood ratio between the quantile prediction intervals versus the trivial predictors, which gives the specified quantile of training samples as forecasts. Specifically, define $N$ ($N=3$ in our case) as the number of quantile predictors. Then the likelihood ratio at $h-$step forecast is:
\begin{equation}
\alpha = \frac{\sum_{i=1}^N \rho_{\tau_i}(Y_T(h) - Q_{Y_T(h)}(\tau_i))}{\sum_{i=1}^N \rho_{\tau_i}(Y_T(h) - Q_{\mathcal{I}_T}(\tau_i))}.    
\end{equation}

Ideally, we should have $\alpha < 1$ if our estimator performs better than the trivial estimator.

\subsubsection{Continuous Ranked Probability Score (CRPS)}
CRPS measures the compatibility of a cumulative distribution function $F$ with an observation $x$ as:
\begin{equation}
    \mathrm{CRPS}(F, x) = \int_{\mathbb{R}} (F(z) - \mathbb{I}\{x \leq z\})^2 ~dz
\end{equation}
where $\mathbb{I}\{x \leq z\}$ is the indicator function which is one if $x\leq z$ and zero otherwise. Therefore, CRPS attains its minimum when the predictive distribution $F$ and the data are equal. We used this library\footnote[5]{https://github.com/TheClimateCorporation/properscoring} to compute CRPS.

\subsubsection{Reconciliation Error}
We compute the reconciliation error of forecasts generated by each method on each dataset to measure the forecasting coherency. More specifically, assume a total of $m$ vertices in the hierarchy at time $T$, the reconciliation error for the mean forecast is defined as 
\begin{equation}
    \frac{1}{H} \sum_{h=1}^H \sum_{i=1}^m \| \hat{Y}_T^i(h) - \sum_{e_{i, k} \in E} \hat{Y}_T^k (h)\|_1.
    \label{equ:re}
\end{equation}

\subsection{Simulation under Unbiased Assumption} 

We follow the data simulation mechanism developed in \citet{wickramasuriya2019optimal, ben2019regularized}, which satisfies the ideal unbiased base forecast and Gaussian error assumptions. The bottom level series were first generated through an ARIMA(p, d, q) process, where the coefficients are uniformly sampled from a predefined parameter space. The contemporaneous error covariance matrix is designed to introduce a positive error correlation among sibling series, while moderately positive correlation among others. We simulate a small and a large hierarchy with 4 and 160 bottom series, respectively. The bottom series are then aggregated to obtain the whole hierarchical time series in groups of two and four. For each series in the hierarchy, we generate 500 observations, and the final $h = 8, 16$ observations are used for evaluation for both the large and small hierarchies. We run the above simulation 100 times and report the average results. Table \ref{sim1_result} shows the average MAPE by fitting an ARIMA model followed by reconciliation on two simulation datasets. We can see that the MinT methods generally perform the best, particularly for MinT methods with shrinkage estimators. This confirms the statements from \citet{ben2019regularized, hyndman2011optimal} that under ideal unbiasedness assumption if the forecasting models are well specified, the MinT methods will provide the optimal solution. Simultaneously, the results of SHARQ are also satisfactory. In fact, it outperforms MinT methods at some levels.

\begin{table*}[h!]
\centering
\renewcommand\arraystretch{1.4}
\caption{\centering{Common Hyper-parameters for all experiments.}}
\scalebox{0.85}{
\begin{tabular}{c|c|c|c|c|c|c}
\hlinewd{1.5pt}
& Train/Valid/Test & Epoch & Learning Rate & Batch Size & Window Size & Horizon \\ \hline
Quantile Simulation & 0.6/0.2/0.2 & 300 & 1.00E-03 & 64 & 128 & 1 \\ 
Unbiased Simulation & 0.6/0.2/0.2 & 100 & 1.00E-03 & 128 & 10 & 1-8 \\
Real-world Data & 0.6/0.2/0.2 & 1000 & 0.1 & 128 & 168 & 1-8 \\
\hlinewd{1.5pt}
\end{tabular}}
\label{params}
\end{table*}

\subsection{Additional Results}
Table \ref{tab:mape_ftse} and \ref{tab:mape_wiki} show the MAPE results of FTSE and Wiki dataset. Moreover, table \ref{tab:lr} is the average likelihood ratio of each reconciliation method across four algorithms. The reported results are average across three random runs. We can see that SHARQ performs better overall in providing accurate probabilistic forecasts. Table \ref{time_comparison} compares the average training and inference time across all forecasting models. Overall, the training time of SHARQ and base forecast are roughly the same, but the inference time of SHARQ is ignorable relative to MinT, and ERM approaches. Since both these methods require matrix inversion to compute the weight matrix. Even if ERM could calculate the weight matrix on a separate validation set before inference, additional matrix computations are required to obtain the results.

\begin{table}[t!]
\centering
\renewcommand\arraystretch{1.4}
\caption{\centering{Average likelihood ratio across forecasting horizons and models.}}
\vspace{.5em}
\begin{tabular}{c|cccc}
\hlinewd{1.5pt}
Likelihood Ratio & Labour & M5 & FTSE & Wiki \\ \hline
BU & 0.36 & 0.48 & 0.50 & 0.66 \\
Base & 0.36 & 0.48 & 0.51 & 0.66 \\
MinT-sam & 0.36 & 0.47 & 0.50 & 0.66 \\
MinT-shr & 0.35 & 0.49 & 0.51 & 0.68 \\
MinT-ols & 0.34 & 0.48 & 0.51 & 0.66 \\
ERM & 0.35 & 0.48 & 0.51 & 0.67 \\
SHARQ & \textbf{0.07} & \textbf{0.25} & \textbf{0.32} & \textbf{0.65} \\
\hlinewd{1.5pt}
\end{tabular}\hspace{5em}
\label{tab:lr}
\end{table}

\subsection{Forecasting Coherency}
Table \ref{recon} compares the forecasting coherency of each reconciliation method. We use the metric defined in (\ref{equ:re}) to compute the forecasting reconciliation error generated by previous experiments. As expected, the MinT and ERM approach give almost perfect coherent forecasts, as these methods can directly compute the close form of weight matrix $P$ to optimally combine the original forecasts. Even though MinT and ERM can give perfectly coherent forecasts, the accuracy can sometimes be worse than the base method, which coincides with Proposition 2 (Hard Constraint). Although SHARQ could not give the most coherent results, there is still a significant improvement compared to incoherent base forecasts. Note that this can be further improved by increasing the penalty of the regularization term.

\section{Hyper-parameter Configurations}
We present hyper-parameters for all the experiments mentioned above. Table \ref{params} lists the common hyper-parameters used on each experiment. Model-specific hyper-parameters are as follows.

\paragraph{Quantile Simulation Experiment}
We simulate 500 samples for both step function and sinusoidal function; the data is trained on a vanilla RNN model with hidden dimension 5, layer dimension 2, and \textit{tanh} nonlinearity. We used 10 ensembles of estimators for bagging, and each model is trained using random 64 samples.

\paragraph{LSTNet}
The number of CNN hidden units: 100; the number of RNN hidden units: 100; kernel size of the CNN layers: 6; window size of the highway component: 24; gradient clipping: 10; dropout: 0.2; skip connection: 24. Note that to enable LSTNet to produce multi-quantile forecast, we add the final layer of each quantile estimator after the fully connected layer of the original model. The same linear bypass then adds the obtained quantile estimators to produce the final results.

\paragraph{N-Beats}
We use the same parameter settings as shown in the public GitHub repository\footnote[6]{https://github.com/philipperemy/n-beats}.




\end{document}